\newtheorem{theorem}{Theorem}
\newtheorem{corollary}{Corollary}
\newtheorem{lemma}{Lemma}
\newtheorem{definition}{Definition}
\newtheorem{conjecture}{Conjecture}
\newcommand{\R}{\mathbb{R}}
\DeclareMathOperator*{\argmax}{arg\,max}
\begin{document}
%
\title{Learning Poisson systems and trajectories of autonomous systems via Poisson neural networks}


\author{\IEEEauthorblockN{Pengzhan Jin,
Zhen Zhang,
Ioannis G. Kevrekidis, and
George Em Karniadakis}

\thanks{The work was supported in part by the DOE PhILMs Project under Grant DE-SC0019453 and in part by the AFOSR MURI Project under Grant FA9550-20-1-0358. The work of I.G.K. was partially supported by DARPA and the Army Research Office. (Corresponding author: George Em Karniadakis.)

Pengzhan Jin is with LSEC, ICMSEC, Academy of Mathematics and Systems Science, Chinese Academy of Sciences, Beijing 100190, China (e-mail: jpz@lsec.cc.ac.cn).

Zhen Zhang and George Em Karniadakis are with the Division of Applied Mathematics, Brown University, Providence, RI 02912 USA (e-mail: zhen\_zhang1@brown.edu; george\_karniadakis@brown.edu).

Ioannis G. Kevrekidis is with the Department of Chemical and Biomolecular Engineering, Johns Hopkins University, Baltimore, Maryland 21211, USA (e-mail: yannisk@jhu.edu).
}
}

%



\IEEEtitleabstractindextext{%
\begin{abstract}
We propose the Poisson neural networks (PNNs) to learn Poisson systems and trajectories of autonomous systems from data. Based on the Darboux-Lie theorem, the phase flow of a Poisson system can be written as the composition of (1) a coordinate transformation, (2) an extended symplectic map and (3) the inverse of the transformation. In this work, we extend this result to the unknotted trajectories of autonomous systems. We employ structured neural networks with physical priors to approximate the three aforementioned maps. We demonstrate through several simulations that PNNs are capable of handling very accurately several challenging tasks, including the motion of a particle in the electromagnetic potential, the nonlinear Schr{\"o}dinger equation, and pixel observations of the two-body problem.
\end{abstract}

\begin{IEEEkeywords}
SympNets, geometric learning, physics-informed neural networks, Darboux-Lie theorem.
\end{IEEEkeywords}
}

\maketitle

\IEEEdisplaynontitleabstractindextext

%
\IEEEpeerreviewmaketitle

\section{Introduction}
%
%
%
%
\IEEEPARstart{T}{he} connection between dynamical systems and neural network models has been widely studied in the literature, see, for example,  \cite{chang2017reversible,weinan2017proposal,haber2017stable,lu2018beyond,lu2020mean}. In general, neural networks can be considered as discrete dynamical systems with the basic dynamics at each step being a linear transformation followed by a component-wise nonlinear (activation) function. In \cite{chen2018neural} the neural ODE is introduced as a continuous-depth model instead of specifying a discrete sequence of hidden layers. Even before the introduction of neural ODEs, a series of models with similar architectures had already been proposed to learn the hidden dynamics of a dynamical system in \cite{rico1994continuous,yannis1998rk,raissi2018multistep}. Theoretical results of the discovery of dynamics are established and enriched in \cite{zhu2020inverse}, where the inverse modified differential equations are introduced to understand the true dynamical system that is learned when the time derivatives are approximated by numerical schemes. The methods above do not assume the specific form of the equation a priori, however, as the physical systems usually possess intrinsic prior properties, other approaches also take into account the prior information of the systems.

For many physical systems from classical mechanics, the governing equation can be expressed in terms of Hamilton's equation. We denote the $d$-by-$d$ identity matrix by $I_{d}$, and let
\begin{equation*}
J:=\begin{pmatrix} 0 & I_{d} \\ -I_{d} & 0 \end{pmatrix},
\end{equation*}
which is an orthogonal, skew-symmetric real matrix, so that $J^{-1}=J^{T}=-J$. The canonical Hamiltonian system can be written as
\begin{equation} \label{eq:ham_eq}
\dot{y}=J^{-1}\nabla H(y),
\end{equation}
where $y(t)\in \R^{2d}$, and $H:\R^{2d}\to\R$ is the Hamiltonian, typically representing the energy of the system. It is well known that the phase flow of a Hamiltonian system is symplectic. Based on that observation, several numerical schemes which preserve symplecticity have been proposed to solve the forward problem in \cite{feng1984difference, hairer2006geometric, lubich2008quantum}. In recent works \cite{tom2019ham,greydanus2019hamiltonian,rezende2019equivariant,sanchez2019hamiltonian,chen2020symplectic,Toth2020Hamiltonian}, the primary focus has been to solve the inverse problem, i.e., identifying Hamiltonian systems from data, using structured neural networks. For example, HNNs \cite{greydanus2019hamiltonian} use a neural network $\Tilde{H}$ to approximate the Hamiltonian $H$ in (\ref{eq:ham_eq}), then learn $\Tilde{H}$ by reformulating the loss function. Based on HNNs, other models were proposed to tackle problems in generative modeling \cite{rezende2019equivariant,Toth2020Hamiltonian} and continuous control \cite{Zhong2020Symplectic}. Another line of approach is to learn the phase flow of the system directly, while encoding the physical prior as symplecticity in the flow. Recently, we introduced symplectic networks (SympNets) with theoretical guarantees that they can approximate arbitrary symplectic maps \cite{jin2020sympnets}. Other networks of this type are presented in \cite{dipietro2020sparse,xiong2020nonseparable}.

In practice, requiring a dynamical system to be Hamiltonian could be too restrictive, as the system has to be described in canonical coordinates. In \cite{cranmer2020lagrangian} Lagrangian Neural Networks (LNNs) were introduced, which allow the system to be expressed in Cartesian coordinates. In \cite{finzi2020simplifying} the models of HNNs and LNNs were generalized to Constrained Hamiltonian Neural Networks (CHNNs) and Constrained Lagrangian Neural Networks (CLNNs), enabling them to learn constrained mechanical systems written in Cartesian coordinates. In other developments, autoencoder-based HNNs (AE-HNNs) \cite{greydanus2019hamiltonian} and Hamiltonian Generative Networks (HGNs) \cite{Toth2020Hamiltonian} were proposed to learn and predict the images of mechanical systems, which can be seen as Hamiltonian systems on manifolds embedded in high-dimensional spaces. Theoretically, Hamiltonian systems on manifolds written in noncanonical coordinates are equivalent to an important class of dynamical systems, namely, the Poisson systems. To wit, the Poisson systems take the form of
\begin{equation*}
    \dot{y}=B(y)\nabla H(y),
\end{equation*}
where $y\in \R^n$, $n$ is not necessarily an even number; $H$ is the Hamiltonian of the Poisson system, and the matrix-valued function $B(y)$ plays the role of $J^{-1}$ in (\ref{eq:ham_eq}), which induces a general Poisson bracket as defined in Section \ref{sec:prelim}. The Darboux-Lie theorem states that a Poisson system can be turned into a Hamiltonian system by a local coordinate transformation. As a consequence, structure-preserving numerical schemes for Poisson systems are normally developed by finding the coordinate transformation manually, then applying symplectic integrators on the transformed systems, see  \cite{tang1996symplectic, hairer2006geometric}.

Inspired by the Darboux-Lie theorem, we propose a novel neural network architecture, the Poisson neural network (PNN), to learn the phase flow of an arbitrary Poisson system, In other words, PNNs can learn any unknown diffeomorphism of Hamiltonian systems. The coordinate transformation and the phase flow of the transformed Hamiltonian system are parameterized by structured neural networks with physical priors. Specifically, in the general setting, we use invertible neural networks (INNs) \cite{dinh2014nice, dinh2016density} to represent the coordinate transformation. If all the data reside on a submanifold with dimension $2d<n$, autoencoders (AEs) can be applied to approximate the coordinate transformation from the coordinates in $\R^n$ to its local coordinates as an alternative choice. This strategy is similar to \cite{rico1992discrete}, which learns dynamics in the latent space discovered through an autoencoder. Compared to LNNs, PNNs are able to work on a more general coordinate system. Moreover, INN-based PNNs are able to learn multiple trajectories of a Poisson system on the whole data space simultaneously, while AE-HNNs, HGNs, CHNNs and CLNNs are only designed to work on low-dimensional submanifolds of $\R^{n}$. Further, our work lays a solid theoretical background for all the aforementioned models, suggesting that they are learning a Poisson system explicitly or implicitly.

Another intriguing property of PNNs is that they are not only capable of learning Poisson systems, but are able to approximate an unknotted trajectory of an arbitrary autonomous system. We present related theorems, which indicate the great expressivity of PNNs. We demonstrate through computational experiments  that PNNs can be practically useful in terms of learning high-dimensional autonomous systems, long-time prediction, as well as frame interpolation. PNNs also enjoy all the advantages listed in \cite{jin2020sympnets} as they are implemented based on the SympNets in this work. However, PNNs as a high-level architecture can also employ other modern symplectic neural networks.

The rest of the paper is organized as follows. Section \ref{sec:prelim} introduces some necessary notation, terminology and fundamental theorems that will be used. The learning theory for PNNs is presented in Section \ref{sec:theory}. Section \ref{sec:result} presents the experimental results for several Poisson systems and autonomous systems. A summary is given in the last section. Supporting materials, including the detailed implementation of PNNs, are included in the appendix.

\section{Preliminaries}\label{sec:prelim}
The material required for this work is based on the mathematical background of Hamiltonian system and its non-canonical form, i.e., the Poisson system. We refer the readers to \cite{hairer2006geometric} for more details.
\subsection{Hamiltonian and Poisson systems}
First, we formally present the definitions of the Hamiltonian system and the Poisson system. We assume that all the functions or maps involved in this paper are as smooth as needed.
\begin{definition}
The canonical Hamiltonian system takes the form
\begin{equation} \label{eq:ham_sys}
\dot{y}=J^{-1}\nabla H(y),
\end{equation}
where $H:U\to\R$ is the Hamiltonian typically representing the energy of the system defined on the open set $U\subset \R^{2d}$.
\end{definition}
System (\ref{eq:ham_sys}) can also be written in a general form by introducing the Poisson bracket.
\begin{definition} \label{def:poisson_brac}
Let $U\subset\R^{n}$ be an open set. 
The Poisson bracket $\{\cdot,\cdot\}:C^\infty(U)\times C^\infty(U)\to C^\infty(U)$ is a binary operation satisfying
\begin{enumerate}[(i)]
    \item (anticommutativity)\\
    $\{F,G\}=-\{G,F\}$,
    \item (bilinearity)\\
    $\{aF+bG,H\}=a\{F,H\}+b\{G,H\},\\ \{H,aF+bG\}=a\{H,F\}+b\{H,G\},\quad a,b\in\R$,
    \item (Leibniz's rule)\\
    $\{FG,H\}=\{F,H\}G+F\{G,H\}$,
    \item (Jacobi identity)\\
    $\{\{F,G\},H\}+\{\{H,F\},G\}+\{\{G,H\},F\}=0$,
\end{enumerate}
for $F,G,H\in C^\infty(U)$.
\end{definition}
Consider the bracket
\begin{equation} \label{eq:can_brac}
    \{F,G\}=\sum_{i=1}^{d}\left(\frac{\partial F}{\partial q_i}\frac{\partial G}{\partial p_i}-\frac{\partial F}{\partial p_i}\frac{\partial G}{\partial q_i}\right)=\nabla F(y)^TJ^{-1}\nabla G(y),
\end{equation}
where $y=(p,q)=(p_1,\cdots,p_d,q_1,\cdots,q_d)\in\R^{2d}$. One can check that (\ref{eq:can_brac}) is indeed a Poisson bracket. Then system (\ref{eq:ham_sys}) can be written as
\begin{equation*} 
    \dot{y}_i=\{y_i,H\},\quad i=1,\cdots,2d,
\end{equation*}
where $y_i$ in the bracket denotes the map $y\to y_i$ for $y=(y_1,\cdots,y_{2d})$ by a slight abuse of notation. Now we extend the bracket (\ref{eq:can_brac}) to a general form as
\begin{equation} \label{eq:gen_brac}
\begin{split}
    (\{F,G\}_B)(y)&=\sum_{i,j=1}^{n}\frac{\partial F(y)}{\partial y_i}b_{ij}(y)\frac{\partial G(y)}{\partial y_j} \\
    &=\nabla F(y)^TB(y)\nabla G(y),
\end{split}
\end{equation}
where $B(y)=(b_{ij}(y))_{n\times n}$ is a smooth matrix-valued function. Note that here we do not require $n$ to be an even number. As many crucial properties of Hamiltonian systems rely uniquely on the conditions $(i)$-$(iv)$ in Definition \ref{def:poisson_brac}, we naturally expect the bracket (\ref{eq:gen_brac}) to be a Poisson bracket.
\begin{lemma} \label{lem:gen_brac}
The bracket defined in (\ref{eq:gen_brac}) is anti-commutative, bilinear and satisfies Leibniz's rule as well as the Jacobi identity if and only if
\begin{equation*}
    b_{ij}(y)=-b_{ji}(y)\ for\ all\ i,j
\end{equation*}
and for all $i,j,k$
\begin{equation*}
    \sum_{l=1}^{n}\left(\frac{\partial b_{ij}(y)}{\partial y_l}b_{lk}(y)+\frac{\partial b_{jk}(y)}{\partial y_l}b_{li}(y)+\frac{\partial b_{ki}(y)}{\partial y_l}b_{lj}(y)\right)=0.
\end{equation*}
\end{lemma}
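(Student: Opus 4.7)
The plan is to verify the four axioms of Definition \ref{def:poisson_brac} one at a time and to show that two of them are automatic while the remaining two are controlled precisely by the stated algebraic conditions on $B$. Bilinearity is immediate from the linearity of the gradient: if $F = aF_1 + bF_2$ then $\nabla F = a\nabla F_1 + b\nabla F_2$, and substitution into (\ref{eq:gen_brac}) is linear in each slot. Leibniz's rule follows pointwise from the product rule $\nabla(FG) = F\nabla G + G\nabla F$, which after substitution into (\ref{eq:gen_brac}) gives $\{FG,H\}_B = F\{G,H\}_B + G\{F,H\}_B$ with no structural assumption on $B$ needed.

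For anticommutativity, I would write
\[
\{F,G\}_B + \{G,F\}_B = \sum_{i,j}\bigl(b_{ij}(y) + b_{ji}(y)\bigr)\,\partial_i F(y)\,\partial_j G(y),
\]
which vanishes identically when $B$ is skew-symmetric. Conversely, fixing any $y_0 \in U$ and testing with the coordinate functions $F(y) = y_i$ and $G(y) = y_j$ forces $b_{ij}(y_0) + b_{ji}(y_0) = 0$ for every $i,j$. The same coordinate-function trick settles the ``only if'' direction of the Jacobi condition: with $F = y_i$, $G = y_j$, $H = y_k$ one has $\{y_i,y_j\}_B = b_{ij}$ and $\{b_{ij},y_k\}_B = \sum_l \partial_l b_{ij}\,b_{lk}$, so the cyclic sum of the nested brackets reduces exactly to the displayed expression in the lemma and must therefore vanish.

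For the ``if'' direction of Jacobi I would expand
\[
\{\{F,G\}_B,H\}_B = \sum_{i,j,k,l} b_{ij}\,\partial_i\!\bigl(b_{kl}\,\partial_k F\,\partial_l G\bigr)\,\partial_j H
\]
and apply the product rule inside $\partial_i$. This produces one family of terms that is first order in derivatives of $F$, $G$, $H$ and carries a factor $\partial_i b_{kl}$, together with two families carrying second partial derivatives of $F$ or of $G$. Summing the three cyclic permutations of $(F,G,H)$ and relabeling dummy indices, every second-derivative contribution (say a factor $\partial_i \partial_k F$) appears twice with coefficients related by swapping two indices of $B$; using the skew-symmetry $b_{ij} = -b_{ji}$ together with the symmetry of the mixed partial $\partial_i\partial_k F = \partial_k\partial_i F$, these pair up and cancel. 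What remains is exactly the cyclic sum
\[
\sum_{i,j,k}\sum_{l}\bigl(\partial_l b_{ij}\,b_{lk} + \partial_l b_{jk}\,b_{li} + \partial_l b_{ki}\,b_{lj}\bigr)\,\partial_i F\,\partial_j G\,\partial_k H,
\]
which vanishes under the hypothesis.

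The main obstacle is the indexed bookkeeping in the second-derivative cancellation: one must carefully rename dummy indices so that the three cyclic contributions align on the same mixed partial and exhibit the sign change produced by skew-symmetry of $B$. Everything else in the argument is formal manipulation, and a companion observation is that testing on coordinate functions reduces the necessity direction to reading off the two conditions directly from the axioms, so no additional machinery is required.
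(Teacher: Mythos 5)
Your proof is correct. The paper does not actually prove Lemma~\ref{lem:gen_brac}; it is stated without proof in Section~\ref{sec:prelim} and attributed to standard references (Hairer et al.), so there is no proof in the paper to compare against. On its own merits your argument is sound: bilinearity and Leibniz's rule hold for any $B$ via linearity of the gradient and the product rule; anticommutativity is equivalent to skew-symmetry of $B$ by testing on coordinate functions; and for Jacobi, testing on coordinates gives the necessity of the displayed first-order condition in $B$, while for sufficiency you correctly identify that after expanding $\{\{F,G\}_B,H\}_B$ and cyclically summing, the three residual families of second-derivative terms pair off. Concretely, for the $\partial_a\partial_c F$ contributions, one term carries $b_{cd}b_{ab}$ from $\{\{F,G\},H\}$ and its partner carries $b_{bc}b_{ad}$ from $\{\{H,F\},G\}$; writing $b_{bc}=-b_{cb}$ and relabeling $a\leftrightarrow c$ (legitimate by $\partial_a\partial_c F=\partial_c\partial_a F$) turns the second into the negative of the first. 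The surviving first-derivative terms are exactly the cyclic expression in the lemma contracted against $\partial_iF\,\partial_jG\,\partial_kH$, which vanishes by hypothesis. This is the standard textbook argument and your write-up, while terse at the index-bookkeeping step, is complete in outline and verifiable.
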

Lemma \ref{lem:gen_brac} provides verifiable equivalence conditions for (\ref{eq:gen_brac}) to become a Poisson bracket. Then, we can give the definition of Poisson system, which is actually the generalized form of the Hamiltonian system.
\begin{definition}
If a matrix-valued function $B(y)$ satisfies Lemma \ref{lem:gen_brac}, then (\ref{eq:gen_brac}) defines a Poisson bracket, and the corresponding differential system
\begin{equation*} 
\dot{y}=B(y)\nabla H(y)
\end{equation*}
is a Poisson system. Here $H$ is still called a Hamiltonian.
\end{definition}
Up to now, the Hamiltonian system and the Poisson system have been unified as
\begin{equation} \label{eq:unified_sys}
    \dot{y}_i=\{y_i,H\}_B,\quad i=1,\cdots,n,
\end{equation}
for $B$ satisfying Lemma \ref{lem:gen_brac}, and the system becomes Hamiltonian when $B=J^{-1}$.

\subsection{Symplectic map and Poisson map}
The study of the phase flows of the Hamiltonian and Poisson systems focuses on the symplectic map and the Poisson map.
\begin{definition} \label{def:symp_map}
A transformation $\Phi:U\to\R^{2d}$ (where $U$ is an open set in $\R^{2d}$) is called a symplectic map if its Jacobian matrix satisfies
\begin{equation*}
    \left(\frac{\partial \Phi}{\partial y}\right)^{T}J\left(\frac{\partial \Phi}{\partial y}\right)=J.
\end{equation*}
\end{definition}

\begin{definition} \label{def:pois_map}
A transformation $\Phi:U\to\R^{n}$ (where $U$ is an open set in $\R^{n}$) is called a Poisson map with respect to the Poisson system (\ref{eq:unified_sys}) if its Jacobian matrix satisfies
\begin{equation*}
    \left(\frac{\partial \Phi}{\partial y}\right)B(y)\left(\frac{\partial \Phi}{\partial y}\right)^T=B(\Phi(y)).
\end{equation*}
\end{definition}
In fact, the phase flow $\phi_t^H(y)$ of the Hamiltonian system is a symplectic map, while the phase flow $\phi_t^P(y)$ of the Poisson system is a Poisson map, i.e., $\phi_t^H(y)$ and $\phi_t^P(y)$ satisfy Definition \ref{def:symp_map} and \ref{def:pois_map}, respectively. Based on these facts, we naturally expect the numerical methods or learning models for Hamiltonian systems and Poisson systems to preserve the intrinsic properties that $\phi_t^H(y)$ and $\phi_t^P(y)$ possess. So far the numerical techniques for Hamiltonian systems have been well developed \cite{feng1984difference, hairer2006geometric, lubich2008quantum}, however, research on the Poisson systems is ongoing due to its complexity.

\subsection{Coordinate changes and the Darboux–Lie theorem}

The main idea in studying Poisson systems is to find the connection to Hamiltonian systems, which are easier to deal with. In fact, a Poisson system expressed in arbitrary new coordinates is again a Poisson system, hence we naturally tend to simplify a given Poisson structure as much as possible by coordinate transformation.
\begin{theorem}[Darboux 1882, Lie 1888] \label{thm:darboux}
Suppose that the matrix $B(y)$ defines
a Poisson bracket and is of constant rank $n-q=2d$ in a neighbourhood of $y_0\in\R^n$. Then, there exist functions $P_1(y),\cdots,P_d(y)$, $Q_1(y),\cdots,Q_d(y)$, and $C_1(y),\cdots,C_q(y)$ satisfying
\begin{equation*}
\begin{aligned}
  &\{P_i,P_j\}=0 &
  &\{P_i,Q_j\}=-\delta_{ij} &
  &\{P_i,C_l\}=0 \\
  &\{Q_i,P_j\}=\delta_{ij} &
  &\{Q_i,Q_j\}=0 &
  &\{Q_i,C_l\}=0 \\
  &\{C_k,P_j\}=0 &
  &\{C_k,Q_j\}=0 &
  &\{C_k,C_l\}=0 \\
\end{aligned}
\end{equation*}
on a neighbourhood of $y_0$, where $\delta_{ij}$ equals to 1 if $i=j$ else 0. The gradients of $P_i, Q_i,C_k$ are linearly independent, so that $y\to(P_i(y),Q_i(y),C_k(y))$ constitutes a local change of coordinates to canonical form.
\end{theorem}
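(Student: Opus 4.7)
The plan is to proceed by induction on $d$, where $2d+q=n$ and $2d$ is the constant rank of $B$ near $y_0$. In the base case $d=0$ the matrix $B$ vanishes identically in a neighbourhood of $y_0$, so $\{F,G\}_B\equiv 0$ and every smooth function is a Casimir; I may take $C_1,\ldots,C_n$ to be standard Cartesian coordinates centred at $y_0$. For the inductive step I would repeatedly use the standard consequence of the Jacobi identity in Lemma~\ref{lem:gen_brac}, namely that the map $F\mapsto X_F:=B\nabla F$ satisfies the Lie algebra relation $[X_F,X_G]=-X_{\{F,G\}_B}$, and the identity $X_F(G)=\{G,F\}_B$.

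To initiate the step, first produce the pair $(P_1,Q_1)$. Since $2d\ge 2$, some entry of $B(y_0)$ is nonzero, hence some coordinate function $P_1$ has a Hamiltonian vector field $X_{P_1}$ that is nonzero at $y_0$. Applying the flow-box theorem to the non-vanishing vector field $X_{P_1}$ yields a local coordinate function $Q_1$ with $X_{P_1}(Q_1)=\{Q_1,P_1\}_B=1$, which is exactly the canonical relation $\{P_1,Q_1\}_B=-1$.

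Next I would observe that $\{P_1,Q_1\}_B=-1$ is constant, so $X_{\{P_1,Q_1\}_B}=0$ and therefore $[X_{P_1},X_{Q_1}]=0$. At $y_0$ these two vector fields are linearly independent, since $X_{P_1}(Q_1)=1$ while $X_{Q_1}(Q_1)=\{Q_1,Q_1\}_B=0$. The simultaneous straightening theorem for commuting independent vector fields then produces local coordinates $(P_1,Q_1,z_1,\ldots,z_{n-2})$ in which $X_{P_1}=\partial/\partial Q_1$ and $X_{Q_1}=-\partial/\partial P_1$. In these coordinates $B$ decomposes into a canonical $2\times 2$ symplectic block on the $(P_1,Q_1)$ directions and a residual $(n-2)\times(n-2)$ block $\widetilde B(z)$ which, by invariance of the brackets $\{z_i,z_j\}_B$ under the Hamiltonian flows of $P_1$ and $Q_1$, depends on the $z$-variables alone. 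This $\widetilde B$ defines a Poisson bracket of constant rank $2(d-1)$ on the slice $N=\{P_1=P_1(y_0),\,Q_1=Q_1(y_0)\}$. The inductive hypothesis applied to $\widetilde B$ yields $P_2,\ldots,P_d,Q_2,\ldots,Q_d,C_1,\ldots,C_q$ as functions of $z$ alone satisfying all the remaining canonical relations; lifted back to a neighbourhood of $y_0$ they automatically Poisson-commute with $P_1$ and $Q_1$. Linear independence of the gradients of the $n$ functions at $y_0$ is then immediate, since by construction $(P_1,Q_1,\ldots,P_d,Q_d,C_1,\ldots,C_q)$ forms a local coordinate system.

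The main obstacle is this reduction step: proving that in the simultaneous straightening coordinates $B$ really has the block-diagonal form claimed, and that the lower-right block depends on $z$ alone. Both facts amount to the identity $\partial_{Q_1}\{F,G\}_B=\{X_{P_1}F,G\}_B+\{F,X_{P_1}G\}_B$ (an easy consequence of Jacobi), together with the analogous identity for $\partial_{P_1}$ using $X_{Q_1}$, applied to $F,G\in\{z_1,\ldots,z_{n-2}\}$ where $X_{P_1}F=X_{Q_1}F=0$. Verifying carefully that $\widetilde B$ inherits the Jacobi identity and has constant rank $2(d-1)$, so that the inductive hypothesis genuinely applies on $N$, is the most delicate piece of bookkeeping in the argument.
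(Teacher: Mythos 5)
The paper states this result as a classical theorem (Darboux 1882, Lie 1888) and supplies no proof of its own, deferring to Hairer, Lubich, and Wanner's \emph{Geometric Numerical Integration} for the argument. Your inductive proof is correct and is essentially the one found in that reference: rectify a nonvanishing Hamiltonian vector field to produce a conjugate pair $(P_1,Q_1)$ with $\{P_1,Q_1\}=-1$; observe that $X_{P_1}$ and $X_{Q_1}$ commute (because their bracket is constant) and are linearly independent at $y_0$; simultaneously straighten them to obtain joint first integrals $z_1,\ldots,z_{n-2}$; use the Jacobi identity to show that the residual Poisson block $\widetilde B$ in the chart $(P_1,Q_1,z)$ depends on $z$ alone, inherits the Poisson axioms, and has constant rank $2(d-1)$; then apply the inductive hypothesis and lift back. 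One small gap worth closing: the linear-independence check as written only shows that $X_{P_1}$ is not a multiple of $X_{Q_1}$ (from $X_{P_1}(Q_1)=1$, $X_{Q_1}(Q_1)=0$); you should also record that $X_{Q_1}(y_0)\neq 0$, which follows from $X_{Q_1}(P_1)=\{P_1,Q_1\}=-1\neq 0$, so that both coefficients in a putative linear relation are forced to vanish.
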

\begin{corollary}[Transformation to canonical form] \label{cor:trans}
Let us denote the transformation of Theorem \ref{thm:darboux} by $z=\theta(y)=(P_i(y),Q_i(y),C_k(y))$. With this change of coordinates, the Poisson system $\dot{y}=B(y)\nabla H(y)$ becomes
\begin{equation*}
\dot{z}=B_0\nabla K(z)\quad with\quad B_0=\begin{pmatrix}J^{-1} & 0 \\ 0 & 0\end{pmatrix},
\end{equation*}
where $K(z)=H(y)$. Writing $z=(p,q,c)$, this system becomes
\begin{equation*}
\dot{p}=-K_q(p,q,c),\quad \dot{q}=K_p(p,q,c),\quad \dot{c}=0.
\end{equation*}
\end{corollary}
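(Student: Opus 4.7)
The plan is to apply the chain rule to the transformation $z=\theta(y)$ and recognize that the resulting matrix coefficient in front of $\nabla K(z)$ is exactly the table of pairwise Poisson brackets of the new coordinates, which Theorem \ref{thm:darboux} computes explicitly. The bulk of the work is therefore bookkeeping with brackets rather than anything analytic.

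First, I would differentiate $z=\theta(y)$ in time to get $\dot z=(\partial\theta/\partial y)\,\dot y=(\partial\theta/\partial y)B(y)\nabla H(y)$. Since $K(z)=H(y)=H(\theta^{-1}(z))$, the chain rule gives $\nabla H(y)=(\partial\theta/\partial y)^T\nabla K(z)$, so
\begin{equation*}
\dot z=\left(\frac{\partial\theta}{\partial y}\right)B(y)\left(\frac{\partial\theta}{\partial y}\right)^T\nabla K(z).
\end{equation*}
This reduces the corollary to identifying the matrix $M(y):=(\partial\theta/\partial y)B(y)(\partial\theta/\partial y)^T$.

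Second, I would observe that the rows of $\partial\theta/\partial y$ are the gradients $\nabla P_i(y)$, $\nabla Q_i(y)$, $\nabla C_k(y)$, so by (\ref{eq:gen_brac}) the $(j,k)$ entry of $M(y)$ is precisely $\{z_j,z_k\}_B$. Substituting the ordering $z=(P_1,\ldots,P_d,Q_1,\ldots,Q_d,C_1,\ldots,C_q)$ and reading off the bracket identities of Theorem \ref{thm:darboux} gives a block form whose $(P,P)$ and $(Q,Q)$ diagonal blocks vanish, whose $(P,Q)$ block equals $-I_d$ and $(Q,P)$ block equals $I_d$, and whose every row or column indexed by a Casimir $C_k$ is zero. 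Recognizing
\begin{equation*}
\begin{pmatrix} 0 & -I_d \\ I_d & 0 \end{pmatrix}=-J=J^{-1}
\end{equation*}
identifies $M(y)=B_0$, so $\dot z=B_0\nabla K(z)$.

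Third, writing $z=(p,q,c)$ and expanding $\dot z=B_0\nabla K(z)$ block-by-block immediately yields $\dot p=-K_q$, $\dot q=K_p$, and $\dot c=0$, which is the stated componentwise form. The only real pitfall is the sign and ordering convention for $J$ versus $J^{-1}$: one must check that the block ordering $(P,Q)$ matches the ordering implicit in the definition of $J$ so that the bracket table really yields $J^{-1}$ (and not $J$ or $-J^{-1}$). Once this convention is fixed, the result is a direct consequence of the chain rule together with the bracket identities furnished by the Darboux-Lie theorem.
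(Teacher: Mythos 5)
Your proof is correct, and it is the standard derivation of this classical result. The paper itself does not supply a proof of Corollary~\ref{cor:trans} --- it is stated as a direct consequence of Theorem~\ref{thm:darboux} and is essentially quoted from \cite{hairer2006geometric} --- so there is no ``paper's proof'' to compare against in detail. Your three steps (chain rule to get $\dot z=(\partial\theta/\partial y)B(y)(\partial\theta/\partial y)^T\nabla K(z)$, identification of the entries of $(\partial\theta/\partial y)B(y)(\partial\theta/\partial y)^T$ with the brackets $\{z_j,z_k\}_B$ via (\ref{eq:gen_brac}), and the read-off of the block form from the Darboux--Lie bracket table) are exactly the intended argument, and you correctly flag and resolve the one genuine pitfall: with the paper's convention $y=(p,q)$ and $J=\begin{pmatrix}0&I_d\\-I_d&0\end{pmatrix}$, the table $\{P_i,Q_j\}=-\delta_{ij}$, $\{Q_i,P_j\}=\delta_{ij}$ does give $\begin{pmatrix}0&-I_d\\I_d&0\end{pmatrix}=J^{-1}$ rather than $J$. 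One small remark worth adding for completeness: the chain-rule step $\nabla H(y)=(\partial\theta/\partial y)^T\nabla K(z)$ uses that $\partial\theta/\partial y$ is invertible, which is guaranteed by the linear independence of the gradients of $P_i,Q_i,C_k$ asserted in Theorem~\ref{thm:darboux}.
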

Corollary \ref{cor:trans} reveals the connection between Poisson systems and Hamiltonian systems via coordinate changes. In a forward problem, i.e., solving the Poisson system by numerical integration, transformations are available for many well-known systems to perform structure-preserving calculations \cite{tang1996symplectic, hairer2006geometric}, but there does not exist a general method to search for the new coordinates of an arbitrary Poisson system, which is still an open research issue. However, the inverse problem, i.e.,  learning an unknown Poisson system based on data, is an easier task.

\section{Learning theory for Poisson systems and trajectories of autonomous systems}
\label{sec:theory}
Assume that there is a dataset $\mathcal{T}=\{(x_i,y_i)\}_1^N$ ($x_i,y_i\in\R^n$) from an unknown autonomous dynamical system, that could be a Poisson system or not, satisfying $\phi_h(x_i)=y_i$ for time step $h$ and phase flow $\phi_t$. We aim to discover the dynamics using learning models, so that we can make predictions into future or perform some other computational tasks. To describe things clearly, we first give the definition of the extended symplectic map.
\begin{definition}
A transformation $\Phi:U\to\R^{n}$ (where $U$ is an open set in $\R^{n}$) is called an extended symplectic map with latent dimension $2d$ if it can be written as
\begin{equation*}
    \Phi\begin{pmatrix} x_1 \\ x_2 \end{pmatrix}=\begin{pmatrix} \phi(x_1,x_2) \\ x_2 \end{pmatrix},\quad x_1\in\R^{2d},x_2\in\R^{n-2d},2d\leq n, 
\end{equation*}
where $\phi$ is differentiable, and $\phi(\cdot,x_2)$ is a symplectic map for each fixed $x_2$. Note that $\Phi$ degenerates to a general symplectic map when $2d=n$.
\end{definition}
\subsection{Poisson neural networks}
We propose a high-level network architecture, i.e., the Poisson neural networks (PNNs), to learn Poisson systems or autonomous flows based on the Darboux–Lie theorem. Theorem \ref{thm:darboux} and Corollary \ref{cor:trans}
indicate that any Poisson system in $n$-dimensional space can be transformed to a ``piecewise'' Hamiltonian system, where $2d\leq n$ is the latent dimension determined by the rank of $B(y)$. The architecture is composed of three parts: (1) a transformation, (2) an extended symplectic map, (3) the inverse of the transformation, denoted by $\theta$, $\Phi$ and $\theta^{-1}$, respectively. For the construction of extended symplectic neural networks we refer the readers to Appendix \ref{impl:sympnet}, which is an important part of this work. The transformations $\theta$ and $\theta^{-1}$ can be implemented using two alternative approaches. \\
\textbf{Primary architecture.} We model $\theta$ as an invertible neural network, as in \cite{dinh2014nice, dinh2016density}, to automatically obtain its inverse $\theta^{-1}$. Then, we learn the data by optimizing the mean-squared-error loss
\begin{equation*}
    L(\mathcal{T}) = \frac{1}{n\cdot N}\sum_{i=1}^N\|\theta^{-1}\circ\Phi\circ\theta(x_i)-y_i\|^2,
\end{equation*}
where $\Phi:\R^n\to\R^n$ is an extended symplectic neural network with latent dimension $2d$. \\
\textbf{Alternative architecture.} We exploit an autoencoder to parameterize $\theta$, $\theta^{-1}$ with two different neural networks. Then, the loss is designed as
\begin{equation*}
\begin{split}
    &L(\mathcal{T})\\
    =&L_s(\mathcal{T}) + \lambda\cdot L_a(\mathcal{T}) \\
    =& \frac{1}{2d\cdot N}\sum_{i=1}^N\|\Phi\circ\theta(x_i)-\theta(y_i)\|^2+ \\
    &\lambda\cdot\frac{1}{n\cdot N}\sum_{i=1}^N(\|\theta^{-1}\circ\theta(x_i)-x_i\|^2+\|\theta^{-1}\circ\theta(y_i)-y_i\|^2),
\end{split}
\end{equation*}
where $\Phi:\R^{2d}\to\R^{2d}$ is a symplectic neural network and $\lambda$ is a hyperparameter to be tuned. Note that in this case $\theta^{-1}\circ\theta$ is not intrinsically equivalent to the identity map. Basically, this architecture only learns the Poisson map limited on a submanifold embedded in the whole phase space.

In both cases we perform predictions by $f_{PNN}^k=\theta^{-1}\circ\Phi^k\circ\theta$, which gives the $k$th step. We prefer the invertible neural networks because the reconstruction loss will disappear compared to the autoencoder, and we also expect to impose further prior information on the transformation. For example, in Section \ref{ssec:lorentz}, we adopt a volume-preserving network as the invertible neural network, the so-called volume-preserving Poisson neural network (VP-PNN), which achieves better generalization compared to the non-volume-preserving Poisson neural network (NVP-PNN), since the original Poisson system has a volume-preserving phase flow. More crucially, autoencoder-based PNNs are unable to learn data lying on the whole space, rather than a $2d$-dimensional submanifold, when $2d<n$. However, autoencoder-based PNNs can perform better in some situations, such as in the numerical case in Section \ref{ssec:tb}. Intuitively, the alternative architecture outperforms the primary architecture when $2d\ll n$. An illustration of PNNs is presented in Fig.~\ref{fig:architecture}.
\begin{figure}[ht]
    \centering
    \includegraphics[width=0.45\textwidth]{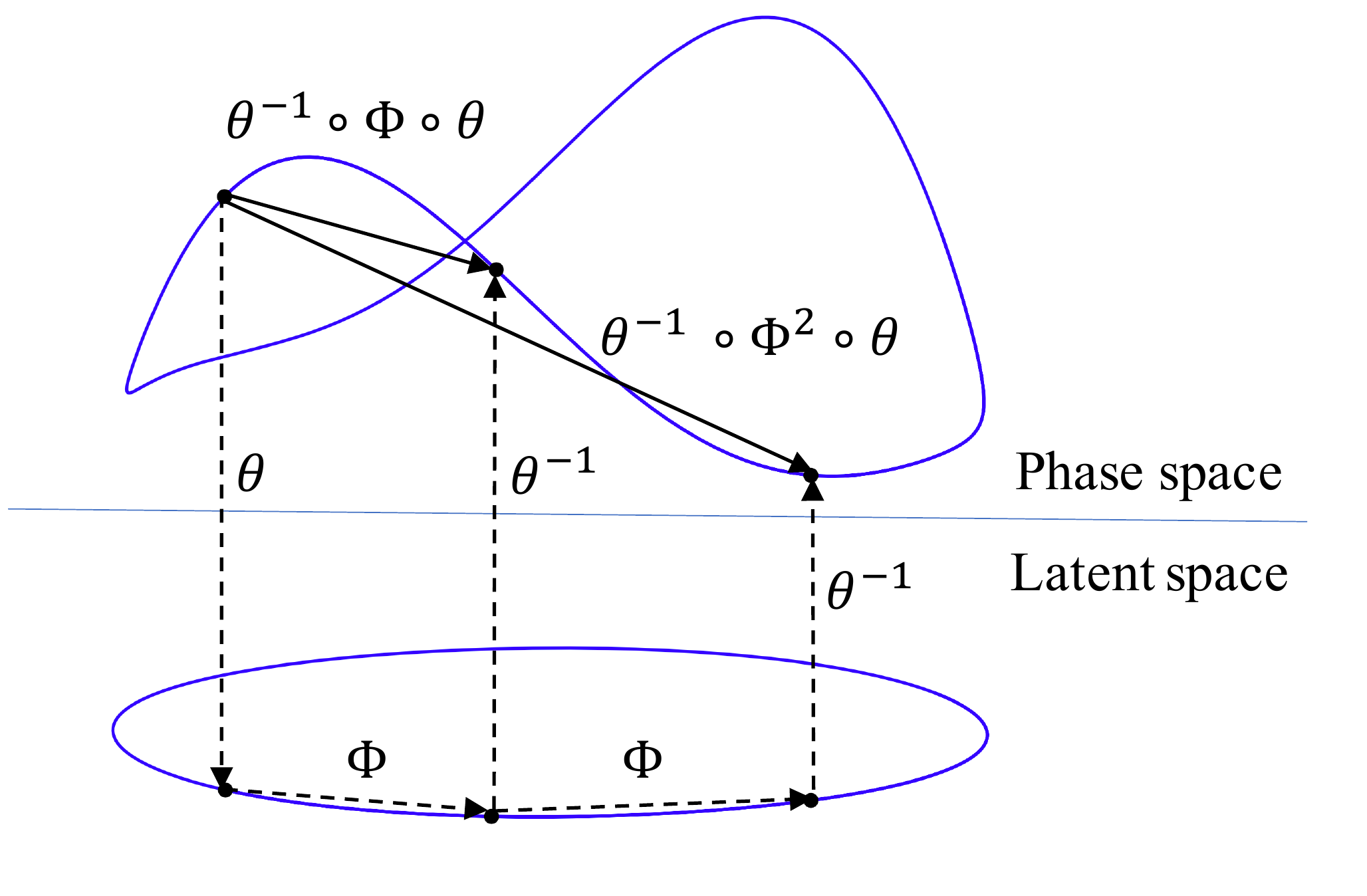}
    \caption{\textbf{Architecture of PNNs.} PNNs are composed of three parts: (1) a transformation, (2) an extended symplectic map, and (3) the inverse of the transformation, denoted by $\theta$, $\Phi$, and $\theta^{-1}$ respectively. The dimension of the latent space is equal to the original phase space, and the key difference is that the transformed system in latent space is ``piecewise'' Hamiltonian of latent dimension $2d$ while the whole space is of dimension $n\geq 2d$.}
    \label{fig:architecture}
\end{figure}
\subsection{Learning Poisson systems}
Consider the case that $\mathcal{T}$ consists of data points from a Poisson system. Next, we present the approximation properties of PNNs.
\begin{theorem}
Suppose that (i) the extended symplectic neural networks $\Phi$ are universal approximators within the space of extended symplectic maps in $C^1$ topology, (ii) (primary architecture) the invertible neural networks $\theta$ are universal approximators within the space of invertible differentiable maps in the $C^1$ topology, and (iii) (alternative architecture) the transformation neural networks $\theta$ and $\theta^{-1}$ are universal approximators within the space of continuous maps in the $C^0$ topology. Then, the corresponding Poisson neural networks $\theta^{-1}\circ\Phi\circ\theta$ are universal approximators within the space of Poisson maps in $C^1$ topology for the primary architecture, and are able to approximate arbitrary Poisson maps (limited on submanifolds) within the space of continuous maps in $C^0$ topology for the alternative architecture. The approximations are considered on compact sets.
\end{theorem}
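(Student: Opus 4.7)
The plan is to take the Darboux--Lie decomposition (Corollary \ref{cor:trans}) as the structural template that the network architecture is already built to mirror. Given any Poisson map $\phi$ on a compact set $K\subset\R^n$ arising from a Poisson system whose structure matrix $B$ has constant rank $2d$, Theorem \ref{thm:darboux} provides a smooth diffeomorphism $\theta_\ast$ such that in the new coordinates $z=\theta_\ast(y)$ the dynamics takes the canonical split form $\dot p=-K_q,\ \dot q=K_p,\ \dot c=0$. Consequently the time-$h$ flow factors as $\phi=\theta_\ast^{-1}\circ\Phi_\ast\circ\theta_\ast$, where $\Phi_\ast$ is by construction an extended symplectic map with latent dimension $2d$ in the sense of the definition in the excerpt (it acts symplectically on the $(p,q)$ block for each fixed $c$ and leaves $c$ invariant). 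All three factors that the PNN is designed to learn therefore exist in closed form for the target map, and the theorem reduces to a composition-stability statement.

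For the primary architecture, I would use hypothesis (ii) to pick an invertible network $\theta$ approximating $\theta_\ast$ in the $C^1$ topology on $K$, and hypothesis (i) to pick an extended symplectic network $\Phi$ approximating $\Phi_\ast$ in $C^1$ on a compact neighbourhood of $\theta_\ast(K)$. The main step is then to observe that composition $(f,g)\mapsto f\circ g$ is continuous in $C^1\times C^1$ on compacts and that inversion is continuous at a diffeomorphism: if $\theta\to\theta_\ast$ in $C^1$ with each $\theta$ a local diffeomorphism, the quantitative inverse function theorem, applied using the uniform lower bound on $|\det(\partial\theta_\ast/\partial y)|$ available by compactness of $K$, yields $\theta^{-1}\to\theta_\ast^{-1}$ in $C^1$ on any compact subset of $\theta_\ast(K)$. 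Chaining the three factors gives $\theta^{-1}\circ\Phi\circ\theta\to\phi$ in $C^1$ on $K$, which is the desired conclusion. For the alternative architecture the data lives on the $2d$-submanifold $M=\theta_\ast^{-1}(\R^{2d}\times\{c_0\})$ and the networks $\theta,\theta^{-1}$ are no longer formal inverses of one another, so I would invoke hypothesis (iii) to approximate $\theta_\ast|_M$ and $(\theta_\ast)^{-1}$ independently in $C^0$, approximate $\Phi_\ast$ by a symplectic network on $\R^{2d}$, and appeal to the continuity of $C^0$ composition on compacts to obtain the claimed $C^0$ approximation of $\phi|_M$.

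The principal obstacle is hidden in the very first step: the Darboux--Lie transformation $\theta_\ast$ is produced only on a neighbourhood of a base point, whereas the compact set $K$ on which we want to approximate may a priori require multiple charts. I would resolve this by restricting, consistent with the theorem's hypothesis that ``the approximations are considered on compact sets,'' to compact sets contained in a single Darboux chart; a global statement would require a partition-of-unity or chart-patching argument that the invertibility assumption on $\theta$ alone does not support. A secondary technical point is ensuring that the approximating $\theta$ actually is a diffeomorphism on $K$, so that $\theta^{-1}$ is well defined; this follows automatically for INNs by construction, and in any case from a sufficiently sharp $C^1$ approximation together with the uniform non-degeneracy of $\partial\theta_\ast/\partial y$ on $K$.
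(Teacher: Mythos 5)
Your proof takes essentially the same route as the paper: the paper's entire proof is the one-line assertion that the claim is ``deduced directly from Theorem~\ref{thm:darboux} and Corollary~\ref{cor:trans},'' which amounts precisely to the factorization $\phi=\theta_\ast^{-1}\circ\Phi_\ast\circ\theta_\ast$ you build the argument around. What you add---the $C^1$-stability of composition and inversion via the quantitative inverse function theorem, the restriction to a single Darboux chart, and the check that the learned $\theta$ remains a diffeomorphism---is a careful spelling-out of the deduction the paper leaves implicit.
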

\begin{proof}
It can be deduced directly from Theorem \ref{thm:darboux} and Corollary \ref{cor:trans}.
\end{proof}
Notice that in the alternative architecture, the PNN $\theta^{-1}\circ\Phi\circ\theta$ itself is not intrinsically a Poisson map, however, by using $f_{PNN}^k=\theta^{-1}\circ\Phi^k\circ\theta$ for multistep prediction, it can also preserve geometric structure and enjoy stable long term performance in practice.
\subsection{Learning trajectories of autonomous systems}
Now consider the case that $\mathcal{T}$ consists of a series of data points on a single trajectory (maybe not from a Poisson system), i.e, $\mathcal{T}=\{(x_{i-1},x_i)\}_1^N$, where $\phi_h(x_{i-1})=x_i$ for time step $h$ and $\phi_t$, which is the phase flow of an unknown autonomous system $\dot{y}=f(y)$. Unlike the theory for learning Poisson systems, the use of PNNs to learn autonomous flows is quite novel. The theories are driven by the observation that symplectic neural networks can learn a trajectory from a non-Hamiltonian system, see Section \ref{ssec:lv} for details. The next theorem reveals the internal mechanism.
\begin{theorem} \label{thm:2d_auto}
Suppose that $U\subset\R^2$ is a simply connected open set, and the periodic solution $y(t)\in U$ is from an autonomous dynamical system
\begin{equation*}
    \dot{y}=f(y),\quad y\in U,
\end{equation*}
then there exists a Hamiltonian $H(y)$, such that $y(t)$ also satisfies the Hamiltonian system
\begin{equation*}
    \dot{y}=J^{-1}\nabla H(y),\quad y\in U.
\end{equation*}
\end{theorem}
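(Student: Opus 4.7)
My plan is to construct the Hamiltonian $H$ in a tubular neighbourhood of the orbit and then extend it smoothly to $U$. The key observation is that the requirement $\dot y(t)=J^{-1}\nabla H(y(t))$ only has to hold along the trajectory, so it suffices to produce $H\in C^\infty(U)$ satisfying $\nabla H(p)=Jf(p)$ for every $p$ lying on the image $\gamma:=\{y(t):t\in\R\}$; the values of $H$ off $\gamma$ are irrelevant.

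First, by uniqueness of solutions of $\dot y=f(y)$, the set $\gamma$ is either a single equilibrium point (in which case $H\equiv 0$ works trivially) or a simple closed $C^1$ curve on which $f$ is nowhere zero. In the latter case $f$ is tangent to $\gamma$, and because $J$ is skew we have $Jf\cdot f=f^T J f=0$, so $Jf$ is normal to $\gamma$. This makes the prescription $\nabla H=Jf$ on $\gamma$ automatically compatible with $H\big|_\gamma$ being constant, since $\tfrac{d}{dt}H(y(t))=\nabla H\cdot f=Jf\cdot f=0$.

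Next, I would set up coordinates $(s,r)$ in a tubular neighbourhood $N$ of $\gamma$ by $\phi(s,r):=y(s)+r\,n(s)$, where $n(s):=Jf(y(s))/|f(y(s))|$. The columns $\partial_s\phi\big|_{r=0}=f$ and $\partial_r\phi\big|_{r=0}=n$ are orthogonal and nonzero, so $\phi$ is a diffeomorphism from $[0,T)\times(-\epsilon,\epsilon)$ onto $N$ for small $\epsilon$. Define $\tilde H(s,r):=r\,|f(y(s))|$. A direct chain-rule computation, using the inverse Jacobian of $\phi$ at $r=0$ to obtain $\nabla_y r\big|_\gamma=Jf/|f|$ and $\nabla_y s\big|_\gamma=f/|f|^2$, yields $\nabla_y\tilde H\big|_\gamma=|f|\cdot Jf/|f|=Jf$, exactly as required.

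Finally, extend the local function: multiply $\tilde H\circ\phi^{-1}$ by a smooth cut-off $\chi$ supported in $N$ and equal to $1$ on a narrower sub-tube around $\gamma$, and set $H\equiv 0$ on $U\setminus N$. The resulting $H\in C^\infty(U)$ still satisfies $\nabla H=Jf$ pointwise on $\gamma$, so $J^{-1}\nabla H(y(t))=f(y(t))=\dot y(t)$ for every $t$, and $y(t)$ is a solution of $\dot y=J^{-1}\nabla H(y)$ on $U$. The main technical hurdle is the Jacobian inversion underlying the gradient formula; conceptually everything is local near $\gamma$, and simple connectedness of $U$ plays only a peripheral role (ruling out orientation obstructions to the two-sided tubular neighbourhood).
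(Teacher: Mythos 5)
Your proof is correct, and it takes a genuinely different (and arguably cleaner) route than the paper's.

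The paper's proof invokes a PDE existence theorem: it sets up a Stokes system $\Delta\boldsymbol v=\mathrm{grad}\,p$, $\mathrm{div}\,\boldsymbol v=0$, $\boldsymbol v|_\Gamma=f$ on the region bounded by the orbit $\Gamma$, cites Ladyzhenskaya's solvability result (whose compatibility condition $\oint_\Gamma f\cdot\boldsymbol n\,ds=0$ holds because $f$ is tangent to $\Gamma$), obtains a global divergence-free extension $\boldsymbol v$ of $f|_\Gamma$, and then uses simple connectedness to write $\boldsymbol v=J^{-1}\nabla H$ via a stream function. You instead build $H$ directly: you note that it suffices to have $\nabla H=Jf$ along $\gamma$, construct normal coordinates $(s,r)$ in a tubular neighbourhood, set $\tilde H=r\,|f(y(s))|$, verify by the inverse-Jacobian computation that $\nabla\tilde H|_\gamma=Jf$, and extend by a cut-off supported in the tube. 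The chain-rule calculation $(D\phi)^{-1}\Rightarrow\nabla_y r|_\gamma=Jf/|f|$ is right, the equilibrium case is handled, and the cut-off extension is legitimate since $\chi\equiv 1$ near $\gamma$ so it contributes nothing to $\nabla H|_\gamma$. What each approach buys: yours is self-contained and elementary, needing no functional-analytic machinery, and you correctly observe that it does not actually use simple connectedness of $U$ (the two-sided tubular neighbourhood exists for any simple closed curve in $\R^2$); the paper's approach, by contrast, is where simple connectedness genuinely enters (to get the stream function), and it delegates the extension problem to a known theorem at the cost of heavier prerequisites. Both yield a globally defined smooth $H$ on $U$ whose Hamiltonian vector field agrees with $f$ on the orbit, which is all the theorem asserts.
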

\begin{proof}
The proof can be found in Appendix \ref{proof:2d_auto}.
\end{proof}
Based on above theorem, one may be able to apply symplectic neural networks to arbitrary periodic solution to autonomous system in $\R^2$. Naturally, we tend to explore similar results in high-dimensional space. We intuitively expect to transform any high-dimensional periodic solution to autonomous system into one lying on a plane with the help of coordinate changes, and subsequently, the original trajectory can be learned via PNN. In fact, this conjecture is almost right, except for the case when the orbit of the considered motion is a non-trivial 1-knot in $\R^3$. For the theory on knots we refer to \cite{livingston1993knot, armstrong2013basic,ranicki2013high}, and we briefly present the basic concepts in Appendix \ref{sec:knot}.
\begin{theorem} \label{thm:nd_auto}
Suppose that $U\subset\R^n$ is a contractible open set, periodic solution $y(t)\in U$ is from an autonomous dynamical system
\begin{equation*}
    \dot{y}=f(y),\quad y\in U,
\end{equation*}
and the orbit of $y(t)$ is unknotted. Then, there exists a Hamiltonian $H(y)$ and a $B(y)$ satisfying Lemma \ref{lem:gen_brac} with rank of $2$, such that $y(t)$ also satisfies the Poisson system
\begin{equation*}
    \dot{y}=B(y)\nabla H(y),\quad y\in U.
\end{equation*}
Note that non-trivial $1$-knots exist only in $\R^3$.
\end{theorem}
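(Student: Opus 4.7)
The plan is to reduce Theorem \ref{thm:nd_auto} to the two-dimensional case (Theorem \ref{thm:2d_auto}) by straightening the orbit $\gamma = \{y(t) : t \in [0,T]\}$ into a planar circle via a diffeomorphism, applying the 2D result to the planar projection, and then pulling the resulting rank-$2$ Hamiltonian structure back to $U$. The key geometric fact driving the proof is that the unknottedness hypothesis plus contractibility of $U$ is precisely what is needed to produce the required straightening.

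First I would construct a diffeomorphism $\theta : U \to V \subset \R^n$ carrying $\gamma$ to the standard circle $S^1 \times \{0\}$ in the plane $\Pi = \{z_3=\cdots=z_n=0\}$. In the new coordinates $z = \theta(y)$, the transformed trajectory $z(t)=\theta(y(t))$ has vanishing last $n-2$ components, so $(z_1(t),z_2(t))$ is a periodic solution of the planar autonomous system induced on $V \cap \Pi$ by pushing $f$ forward through $\theta$ and restricting. Applying Theorem \ref{thm:2d_auto} on this simply connected planar slice yields a Hamiltonian $\tilde K(z_1,z_2)$ for which the planar circle satisfies $(\dot z_1,\dot z_2) = \tilde J^{-1}\nabla \tilde K$, with $\tilde J$ the $2\times 2$ canonical matrix. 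I then extend $\tilde K$ trivially to $K(z_1,\ldots,z_n)=\tilde K(z_1,z_2)$ on $V$ and take
\begin{equation*}
B_0 = \begin{pmatrix} \tilde J^{-1} & 0 \\ 0 & 0 \end{pmatrix},
\end{equation*}
so that $z(t)$ solves the degenerate Poisson system $\dot z = B_0 \nabla K(z)$. Finally I set $H(y) = K(\theta(y))$ and $B(y) = (D\theta(y))^{-1} B_0 (D\theta(y))^{-T}$; by the chain rule $\nabla H = (D\theta)^T (\nabla K)\circ\theta$, so $B(y)\nabla H(y) = (D\theta)^{-1} B_0 \nabla K$, which pulls the relation $\dot z = B_0\nabla K$ back to $\dot y = B(y)\nabla H(y)$. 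The matrix $B$ inherits antisymmetry and rank $2$ from $B_0$ because these properties are preserved under congruence by an invertible Jacobian, and the Jacobi identity is preserved because it encodes the coordinate-invariant Jacobi identity of the underlying Poisson bracket, so $B$ satisfies Lemma \ref{lem:gen_brac}.

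The principal obstacle is the first step: the construction of the straightening diffeomorphism $\theta$. In dimensions $n\geq 4$ every smooth embedding of $S^1$ into a contractible open set is ambient-isotopic to the standard circle by transversality and general position, so $\theta$ is produced directly. In dimension $n=3$, the unknot assumption guarantees that $\gamma$ bounds an embedded $2$-disk with trivial tubular neighborhood, from which an ambient isotopy of $U$ carrying $\gamma$ to the standard circle can be built; this is precisely the sense in which non-trivial $1$-knots only obstruct the argument when $n=3$. Should the straightening be obtainable only on a smaller neighborhood of $\gamma$, I may restrict $U$ to that neighborhood without loss, since the conclusion only asks that $y(t)$ be \emph{one} solution of the resulting Poisson system on $U$.
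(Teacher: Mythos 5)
Your proof follows essentially the same route as the paper's: straighten the unknotted orbit to a planar circle via a coordinate change $\theta$, invoke Theorem \ref{thm:2d_auto} on the planar slice to obtain a Hamiltonian, extend trivially to a degenerate rank-$2$ structure $B_0$ in $\R^n$, and pull back through $\theta$ to get the Poisson system on $U$. You are somewhat more explicit than the paper on two points it handles by citation or assertion, namely the congruence formula $B=(D\theta)^{-1}B_0(D\theta)^{-T}$ with the attendant check of antisymmetry, rank, and the Jacobi identity, and the geometric construction of the straightening diffeomorphism (general position for $n\geq 4$, Seifert-disk argument for $n=3$), but the substance of the argument is identical.
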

\begin{proof}
The proof can be found in Appendix \ref{proof:nd_auto}.
\end{proof}
Up to now, we have shown that PNNs can be used to learn almost any periodic solution to autonomous systems, and the latent dimension is actually fixed as 2. The symplectic structure embedded in PNNs will endow the predictions with long term stability and more accuracy, for periodic solutions. Nevertheless, there is still a limitation of this method, as one can see, PNNs are allowed to learn only a single trajectory upon training. Basically, the limitation is inevitable as we have already got rid of most of the constraints on the vector field $f$, which is in fact a trade-off between data and systems. In spite of this fact, we still expect to further develop a strategy to relax the requirements of ``single'' and ``periodic'', by increasing the latent dimension.
\begin{conjecture}
Suppose that $U\subset\R^n$ is a contractible open set, $f:U\to\R^n$ is a vector field, and $S$ is a smooth trivial $(2d-1)$-knot embedded in $U$. If $f|_{S}$ is a smooth tangent vector field on $S$, then there exists a smooth single-valued function $H(y)$ and a smooth matrix-valued function $B(y)$ satisfying Lemma \ref{lem:gen_brac} with rank of $2d$, such that $B(y)\nabla H(y)|_{S}=f|_{S}$.
\end{conjecture}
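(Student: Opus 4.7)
The plan is to reduce the problem to a canonical model via unknottedness, then realize the prescribed vector field as a Hamiltonian vector field restriction, with the main difficulty arising from a cohomological obstruction in higher dimension.

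First, I would use the hypothesis that $S$ is a trivial $(2d-1)$-knot in the contractible set $U$ to construct an ambient diffeomorphism $\Theta:U\to V\subset\R^n$ sending $S$ to the standard embedded sphere $S_0:=S^{2d-1}\times\{0\}\subset\R^{2d}\times\R^{n-2d}$, using a generalized Schoenflies plus tubular neighborhood argument of the type set up in Appendix~\ref{sec:knot}. Since rank and the Jacobi identity are diffeomorphism-invariant, it suffices to produce $B,H$ on $V$ with $B\nabla H|_{S_0}=g$ for the pushforward field $g:=\Theta_* f|_{S_0}$ (which is automatically tangent to $S_0$ and lies in $\R^{2d}\times\{0\}$), and then pull back via $\Theta$.

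Next, I would try the standard Darboux--Lie form $B=B_0:=\mathrm{diag}(J^{-1},0)$ of rank $2d$, whose symplectic leaves are the planes $\R^{2d}\times\{c\}$. Choosing $H$ independent of $c$, the equation $B_0\nabla H|_{S_0}=g$ reduces to finding a smooth $K:\R^{2d}\to\R$ with $\nabla K|_{S^{2d-1}}=Jg$. By Whitney's extension theorem, such a $K$ exists iff the tangential component of $Jg$ on the sphere is an exact $1$-form on $S^{2d-1}$. For the case $d=1$, a direct computation shows $Jg$ is purely radial: if $g=\phi(\theta)\,\partial_\theta$, then $Jg=\phi(\theta)\hat r$, so the tangential part vanishes and $K(r,\theta)=(r-1)\phi(\theta)$ realizes the prescription, reproducing Theorem~\ref{thm:nd_auto} in the rank-$2$ setting.

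The main obstacle is the case $d\geq 2$. Then $S^{2d-1}$ is simply connected with $H^1(S^{2d-1})=0$, so exactness of the tangential $1$-form reduces to closedness, but closedness of $(Jg)_{\mathrm{tang}}$ need not hold for arbitrary tangent $g$ on $S^{2d-1}$; this is the reason the straightforward Darboux--Lie reduction is not sufficient. To overcome this I would exploit the freedom not yet used, namely the global choice of $B$, which carries nontrivial moduli beyond the local Darboux--Lie normal form. My approach would be a Moser-type isotopy argument: deform the symplectic structure on the leaf $L:=\R^{2d}\times\{0\}$ (equivalently, replace $B_0$ by a cohomologous Poisson bivector of the same rank) so that the prescribed gradient becomes exact on $S^{2d-1}$, while keeping $f|_S$ unchanged. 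When $n>2d$ there is additional flexibility from the $(n-2d)$-dimensional kernel of $B$, which, for a suitably chosen $B$, intersects $TS$ nontrivially and lets one modify the tangential part of $\nabla H|_S$ by kernel elements without altering $B\nabla H$. Making either of these routes fully rigorous is the central difficulty, and is presumably why the statement is presented as an open conjecture rather than a theorem.
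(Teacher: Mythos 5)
The statement you are addressing is explicitly presented as a \emph{conjecture}: the paper offers no proof and closes the summary by saying it is left as future work. So there is no paper argument to compare against, and the right frame of reference is whether your sketch actually resolves the open problem or merely analyzes it. It does the latter, and you say as much yourself, which is the honest and correct assessment.

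Your reduction step is sound: unknottedness gives an ambient diffeomorphism carrying $S$ to the standard $S^{2d-1}\times\{0\}$, and Poisson structures of a given rank pull back to Poisson structures of the same rank, so it suffices to solve the problem for the standard sphere. Your $d=1$ computation (that $Jg$ is radial on $S^1$, so $(Jg)_{\mathrm{tang}}=0$ and $K=(r-1)\phi(\theta)$ does the job) is correct and gives a cleaner, more elementary derivation of the rank-$2$ case than the paper's own route to Theorem~\ref{thm:2d_auto}, which invokes existence theory for a Stokes-type boundary value problem from Ladyzhenskaya and then passes through the identification of solenoidal planar fields with Hamiltonian ones. Your direct construction of $K$ bypasses the PDE machinery entirely; it is worth noting as an independent, shorter proof of the established special case.

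For $d\ge 2$, however, your proposal does not constitute a proof, and you correctly flag this. Two issues deserve emphasis. First, you have implicitly narrowed the search to $B=B_0=\mathrm{diag}(J^{-1},0)$ and $H$ constant along the kernel directions, but the conjecture allows an arbitrary $B$ of rank $2d$; the obstruction you derive, namely closedness of $\iota^*(Jg)$ on $S^{2d-1}$, is an obstruction only for that particular ansatz, so it is not yet clear that it is an obstruction to the conjecture itself. Second, the two escape routes you propose, a Moser-type deformation of the leafwise symplectic form and exploiting the intersection of $\ker B$ with $TS$, are plausible but unsubstantiated: the Moser argument would require an a priori cohomological identity that is exactly the thing in question, and the kernel-freedom argument requires constructing a Poisson structure of the prescribed rank whose leaves interact with $S$ in a controlled way, which is itself a nontrivial global problem. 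Neither is carried out, so the conjecture remains open in your writeup, exactly as it does in the paper. What you have done is give a precise statement of \emph{where} the difficulty sits, which is useful, but it is not a proof.
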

The conjecture provides a more general insight into the above theoretical results on single trajectory. If it holds, one may learn several trajectories lying on a higher-dimensional trivial knot simultaneously upon training, with higher latent dimension. Unfortunately, the proofs of 1-knot case cannot be easily extended to the general case, since the fact that a solenoidal vector field on $\R^2$ is exactly a field of Hamiltonian system does not hold for higher-dimensional space. A more thorough explanation of this conjecture is needed in future works.

\section{Simulation results}\label{sec:result}
In this section, we present several simulation cases to verify our theoretical results, and indicate the potential application of PNNs in the field of computer vision. All the hyper-parameters for detailed architectures and training settings are shown in Appendix \ref{sec:arch} and Table~\ref{tab:model_params}. For each detailed Poisson system involved, we obtain the ground truth and data using a high order symplectic integrator with its corresponding coordinate transformation, which is listed in Appendix \ref{sec:trans}.

\begin{figure*}[ht]
    \centering
    \includegraphics[width=0.9\textwidth]{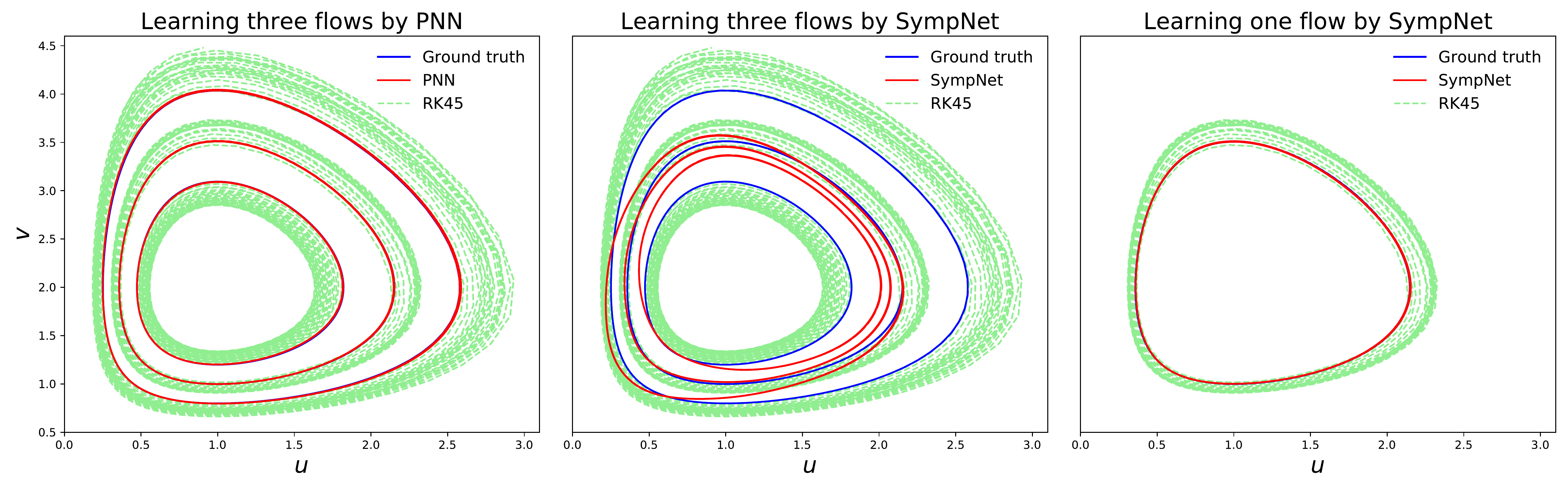}
    \caption{\textbf{Lotka–Volterra equation}. (\textbf{Left}) Learning three trajectories by PNN. The PNN successfully learns the system and achieves stable long time predictions, compared to the classical Runge-Kutta method of order four (RK45). (\textbf{Middle}) Learning three trajectories by a SympNet. The SympNet fails to fit the three trajectories simultaneously, since the learned system is not Hamiltonian. (\textbf{Right}) Learning a single trajectory by a SympNet. The SympNet is able to learn this single trajectory due to Theorem \ref{thm:2d_auto}.}
    \label{fig:LV}
\end{figure*}

\subsection{Lotka–Volterra equation} \label{ssec:lv}
The Lotka–Volterra equation can be written as
\begin{equation} \label{eq:LV}
    \begin{pmatrix} \dot{u} \\ \dot{v} \end{pmatrix}=\begin{pmatrix} u(v-2) \\ v(1-u) \end{pmatrix}=\begin{pmatrix} 0 & uv \\ -uv & 0 \end{pmatrix}\nabla H(u,v),
\end{equation}
where $H(u,v)=u-\ln u + v - 2 \ln v$. As a Poisson system, we are able to discover the underlying symplectic structure using PNNs. The data consist of three trajectories, starting at $(1,0.8),(1,1),(1,1.2)$, respectively. We generate 100 training points with time step $h=0.1$ for each trajectory. Besides the PNN, we also use a SympNet to learn the three trajectories simultaneously, as well as learn the single trajectory starting at $(1,1)$. We perform predictions for 1000 steps starting at the end points of the training trajectories, and the results of the three cases are presented in Fig.~\ref{fig:LV}. As shown in the left figure, the PNN successfully learns the system and achieves a stable long time prediction, compared to the classical Runge-Kutta method of order four (RK45). Meanwhile, the SympNet \cite{jin2020sympnets} fails to fit the three trajectories simultaneously, since the data points are not from a Hamiltonian system, as shown in the middle figure. However, the right figure reveals that the SympNet is indeed able to learn a single trajectory, even though it is not from a Hamiltonian system, which is consistent with Theorem \ref{thm:2d_auto}.

\subsection{Extended pendulum system} \label{ssec:pd}
We test the performance of a PNN on odd-dimensional Poisson systems. The motion of the pendulum system is governed by
\begin{equation*}
    \begin{pmatrix} \dot{p} \\ \dot{q} \end{pmatrix}=\begin{pmatrix} -\sin q \\ p \end{pmatrix}=\begin{pmatrix} 0 & -1 \\ 1 & 0 \end{pmatrix}\nabla H(p,q),
\end{equation*}
where $H(p,q)=\frac{1}{2}p^2 - \cos q$. This is a canonical Hamiltonian system, and we subsequently extend this system to three-dimensional space:
\begin{equation*}
    \begin{pmatrix} \dot{p} \\ \dot{q} \\ \dot{c} \end{pmatrix}=\begin{pmatrix} -\sin q \\ p+c \\ 0 \end{pmatrix}=\begin{pmatrix} 0 & -1 & 0 \\ 1 & 0 & 0 \\
    0 & 0 & 0
    \end{pmatrix}\nabla \Tilde{H}(p,q,c),
\end{equation*}
where $\Tilde{H}(p,q,c) = H(p,q)+pc$.
To make the data more difficult to learn, a nonlinear transformation
    $(u,v,r) = \theta^{-1}(p,q,c) = (p,q,p^2+q^2+c)$ is applied to the extended phase space.
The governing equation for the transformed system is as follows:
\begin{equation} \label{eq:PD}
\begin{split}
\begin{pmatrix}
\dot{u}\\ \dot{v} \\ \dot{r}
\end{pmatrix} &=
\begin{pmatrix}
0 & -1 & -2v\\ 1 & 0 & 2u \\ 2v & -2u & 0
\end{pmatrix}
\begin{pmatrix}
u-3u^2-v^2+r\\ \sin v -2uv \\ u
\end{pmatrix}\\ &=:B(u,v,r)\nabla K(u,v,r),
\end{split}
\end{equation}
where $K(u,v,r)=\frac{1}{2}u^2-\cos v +ur-u^3-uv^2$. One may readily verify that $B$ satisfies Lemma \ref{lem:gen_brac}, therefore (\ref{eq:PD}) is a Poisson system.

Three trajectories are simulated with initial conditions $x_0 = (0, 1, 1^2)$, $(0, 1.5, 1.5^2 + 0.1)$, $(0, 2, 2^2 + 0.2)$, and time step $h = 0.1$. We use data points obtained in the first 100 steps as our training set. Then we perform predictions for 1000 steps starting at the end points of training set; the results are shown in Fig.~\ref{fig:PD}. From the left figure, it can be seen that the predictions made by the PNN match the ground truth perfectly, remaining on the true trajectories after long times. Meanwhile, the PNN is able to recover the underlying structure of the system, as shown on the right hand side. The trajectories of the system in the latent space are recovered as trajectories on parallel planes, which matches the fact that the trajectories are generated from several different two-dimensional symplectic submanifolds.
\begin{figure*}[ht]
    \centering
    \includegraphics[width=0.8\textwidth]{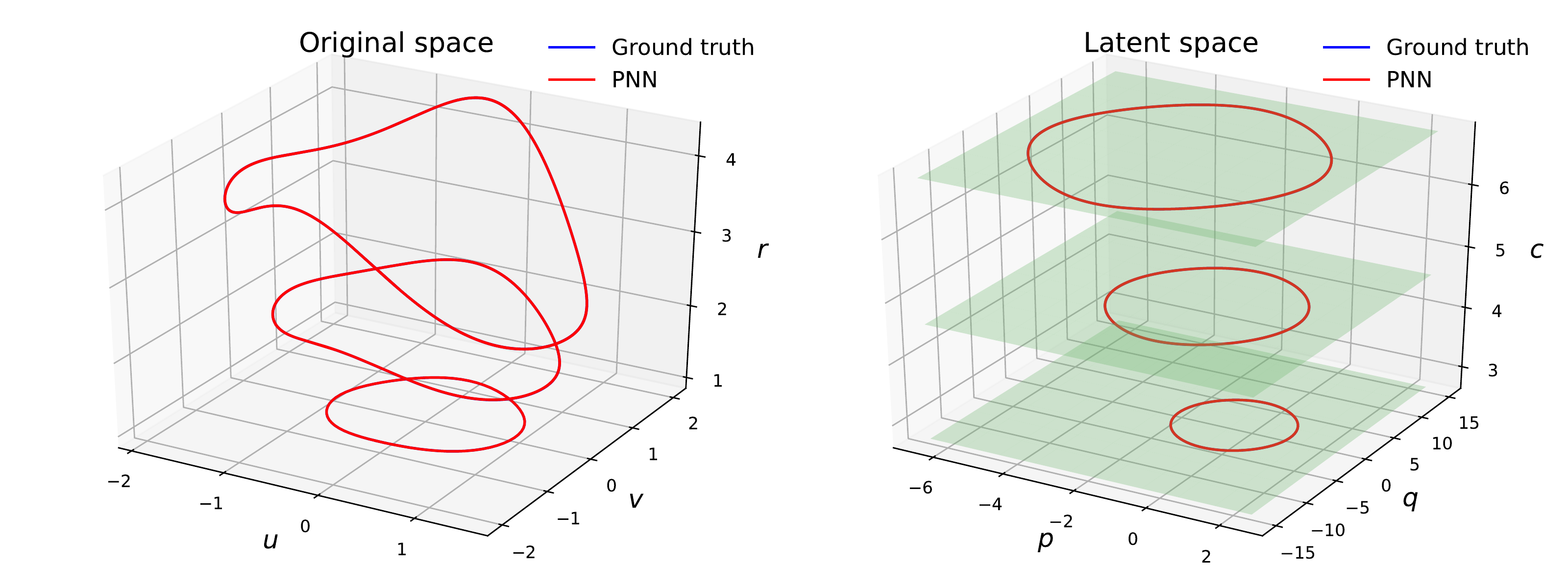}
    \caption{\textbf{Extended pendulum system.} (\textbf{Left}) Three ground truth trajectories compared to the predictions made by the PNN. The predictions match the ground truth perfectly without deviation. (\textbf{Right}) The ground truth and the predictions expressed in the latent space by trained $\theta$. The trajectories are lying on three different parallel planes.}
    \label{fig:PD}
\end{figure*}

\subsection{Charged particle in an electromagnetic potential} \label{ssec:lorentz}
We consider the dynamics of the charged particle in an electromagnetic potential governed by the Lorentz force
\begin{equation*}
    m\Ddot{x}=q(E+\dot{x}\times B),
\end{equation*}
where $m$ is the mass, $x\in\R^3$ denotes the particle’s position, $q$ is the electric charge, $B=\nabla\times A$ denotes the magnetic field, and $E=-\nabla\varphi$ is the electric field with $A,\varphi$ being the potentials. Let $\dot{x}=v$ be the velocity of the charged particle, then the governing equations of the particle's motion can be expressed as
\begin{equation} \label{eq:poi_lorentz}
\begin{split}
    &\begin{pmatrix}\dot{v} \\ \dot{x}\end{pmatrix}=\begin{pmatrix}-\frac{q}{m^2}\hat{B}(x) & -\frac{1}{m}I \\ \frac{1}{m}I & 0\end{pmatrix}\nabla H(v,x),\\ &H(v,x)=\frac{1}{2}mv^Tv+q\varphi(x),    
\end{split}
\end{equation}
where
\begin{equation*}
    \hat{B}(x)=\begin{pmatrix}0 & -B_3(x) & B_2(x) \\ B_3(x) & 0 & -B_1(x) \\ -B_2(x) & B_1(x) & 0\end{pmatrix}
\end{equation*}
for $B(x)=(B_1(x),B_2(x),B_3(x))$. Here we test the dynamics with $m=1$, $q=1$, and
\begin{equation*}
    A(x)=\frac{1}{3}\sqrt{x_1^2+x_2^2}\cdot(-x_2,x_1,0),\quad \varphi(x)=\frac{1}{100\sqrt{x_1^2+x_2^2}}
\end{equation*}
for $x=(x_1,x_2,x_3)^T$. Then
\begin{equation*}
\begin{split}
    &B(x)=(\nabla\times A)(x)=(0,0,\sqrt{x_1^2+x_2^2}),\\ &E(x)=-(\nabla\varphi)(x)=\frac{(x_1,x_2,0)}{100(x_1^2+x_2^2)^{\frac{3}{2}}}.
\end{split}
\end{equation*}

The initial state is chosen to be $v=(1,0.5,0)$, $x=(0.5,1,0)$, in which case the system degenerates into four-dimensional dynamics, i.e., the motion of the particle is always on a plane. For simplicity, we also denote them by $v=(1,0.5)$, $x=(0.5,1)$, and study the dimension-reduced system. We then generate a trajectory of 1500 training points followed by 300 test points with time step $h=0.1$. Subsequently, a volume-preserving PNN (VP-PNN) is trained to learn the training set, and we perform predictions for 2000 steps starting at the end point of the training set, as shown in Fig.~\ref{fig:LF}. It can be seen that the VP-PNN perfectly predicts the trajectory without deviation. Furthermore, we also train a non-volume-preserving PNN (NVP-PNN) and a volume-preserving neural network (VPNN) to compare with the above model. After sufficient training, the three models make predictions starting at the initial state to reconstruct trajectories, as shown in Fig.~\ref{fig:LF_comparison}. As one can see, the VP-PNN performs slightly better than the NVP-PNN, while the NVP-PNN is much better than the VPNN. The quantitative results shown in Table \ref{tab:LF_loss} also support this observation. Although the VP-PNN has larger training MSE and one step test MSE than the NVP-PNN, its long time test MSE is instead less, which is not surprising because NVP-PNNs and VPNNs possess the prior information of symplectic structure and volume preservation respectively, while VP-PNNs has both of them. Note that the considered dimension-reduced system of (\ref{eq:poi_lorentz}) is source-free hence its phase flow is intrinsically volume-preserving on the four-dimensional space.
\begin{figure}[htb]
    \centering
    \includegraphics[width=0.48\textwidth]{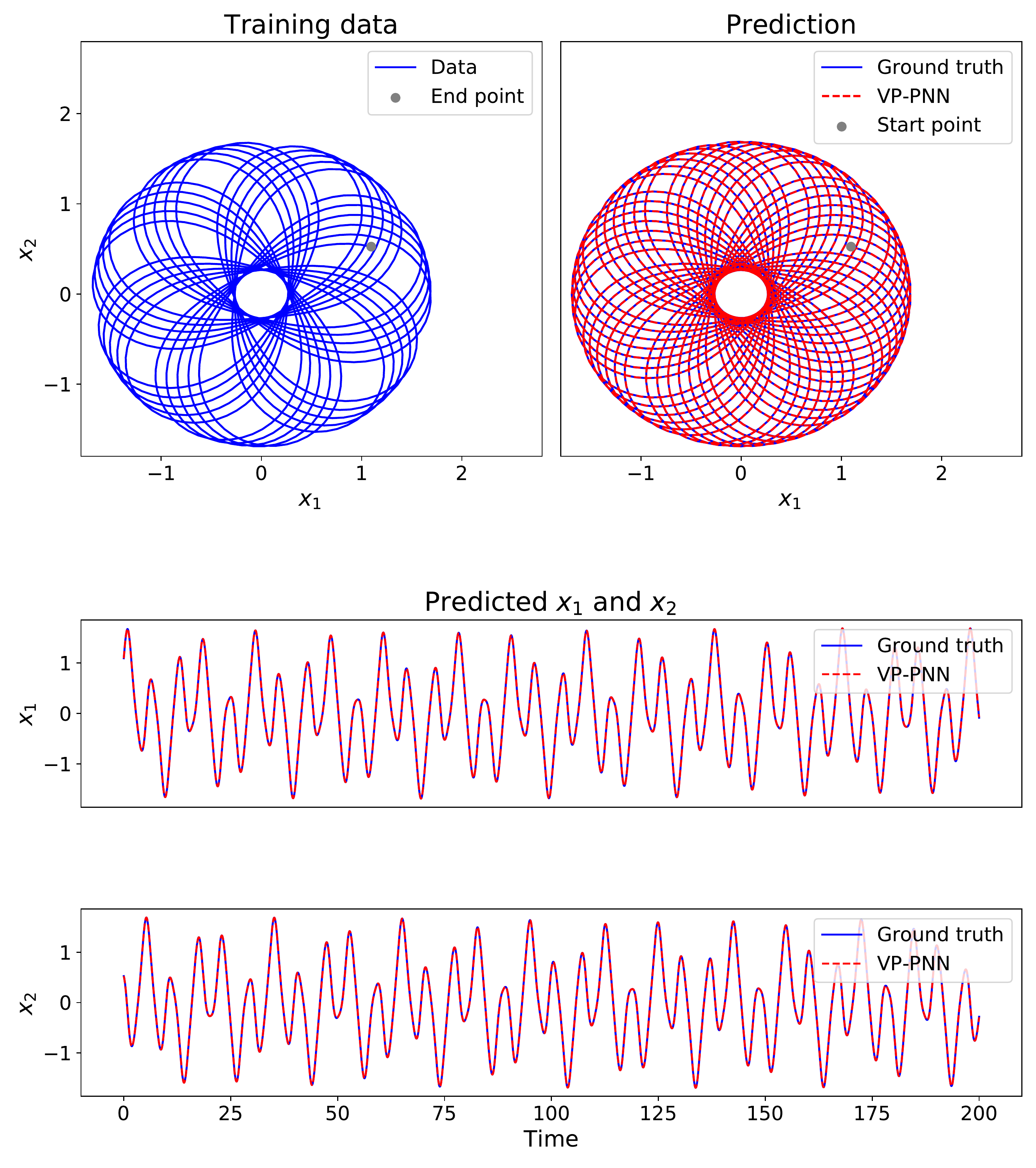}
    \caption{\textbf{Charged particle in the electromagnetic potential.} (\textbf{Top-left}) The position $(x_1,x_2)$ of training flow. (\textbf{Top-right}) Prediction of the VP-PNN starting at the end point of training flow for 2000 steps. The VP-PNN perfectly predicts the trajectory without deviation. (\textbf{Bottom}) Prediction of the position of particle over time.}
    \label{fig:LF}
\end{figure}

\begin{figure*}[htb]
    \centering
    \includegraphics[width=0.9\textwidth]{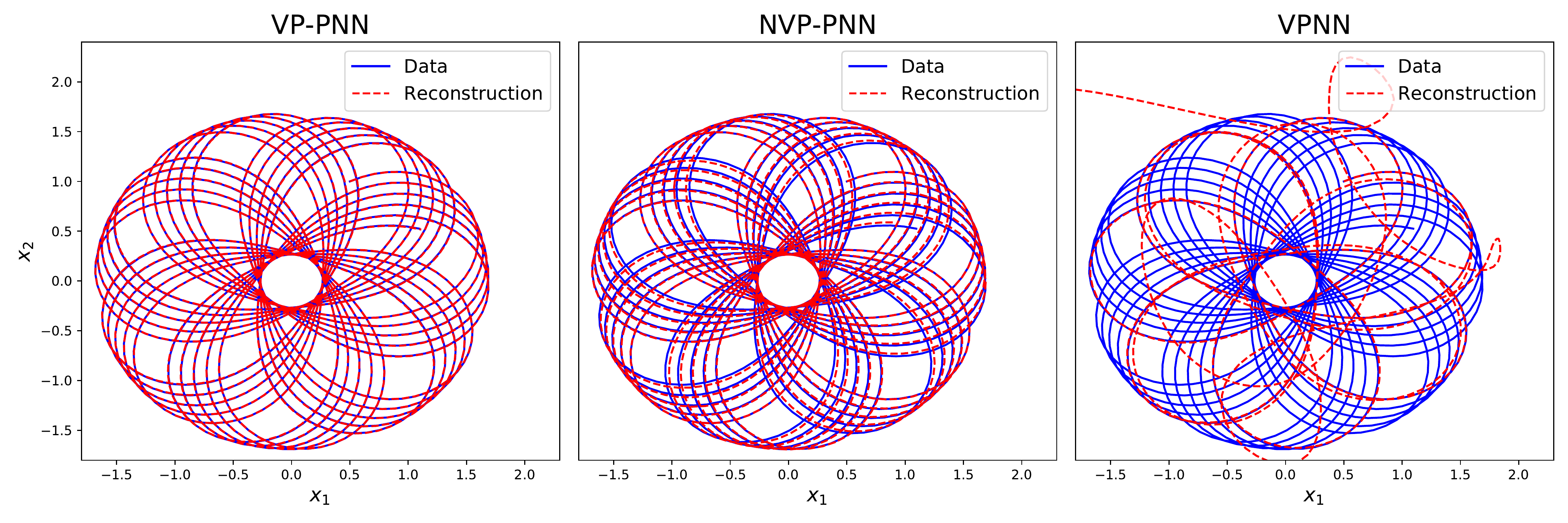}
    \caption{\textbf{Comparison of the reconstructed trajectories by three models.} (\textbf{Left}) The reconstructed trajectory by the VP-PNN, which perfectly matches the ground truth. (\textbf{Middle}) The reconstructed trajectory by the NVP-PNN, which performs well, but slightly worse than the VP-PNN. (\textbf{Right}) The reconstructed trajectory by the general VPNN, which fails to make long time prediction.}
    \label{fig:LF_comparison}
\end{figure*}

\begin{table}[htbp]
\centering
\begin{tabular}{|c|c|c|c|}
\hline
 & VP-PNN & NVP-PNN & VPNN \\ \hline
Training MSE  & $4.6\times 10^{-9}$ & $1.6\times 10^{-9}$ & $2.7\times 10^{-8}$ \\ \hline
 Test MSE (One step) & $1.5\times 10^{-8}$ & $9.8\times 10^{-9}$ & $6.1\times 10^{-8}$ \\
 \hline
  Test MSE (Long time) & $1.1\times 10^{-5}$ & $5.1\times 10^{-4}$ & $8.8\times 10^{3}$ \\
 \hline
\end{tabular}
\caption{\textbf{Losses of the three trained models.} The long time test MSE is evaluated along 1000 steps starting at the end point of training data. Although VP-PNN has larger training MSE and one step test MSE than NVP-PNN, its long time test MSE is instead less. The VPNN performs worse than above two models and fails for long time prediction.}
\label{tab:LF_loss}
\end{table}

\subsection{Nonlinear Schr{\"o}dinger equation} \label{ssec:al}
We consider the nonlinear Schr{\"o}dinger equation
\begin{equation*}
\left\{\begin{aligned}
&i\frac{\partial w}{\partial t}+\frac{\partial^2 w}{\partial x^2}+2|w|^2w=0, \\
&w(x,0)=w_0(x),
\end{aligned}\right.
\end{equation*}
where $w(x,t)=u(x,t)+iv(x,t)$ is a complex field and the boundary condition $w_0(x)$ is periodic, i.e., $w_0(x+1)=w_0(x)$ for $x\in\mathbb{R}$. An interesting space discretization of the nonlinear Schr{\"o}dinger equation is the Ablowitz–Ladik model
\begin{equation*}
    i\dot{w}_k+\frac{1}{\Delta x^2}(w_{k+1}-2w_k+w_{k-1})+|w_k^2|(w_{k+1}+w_{k-1})=0
\end{equation*}
with $w_k=w(k\Delta x, t)$, $\Delta x=1/N$. Letting  $w_k=u_k+iv_k$, we obtain
\begin{equation*}
    \begin{split}
        \dot{u_k}&=-\frac{1}{\Delta x^2}(v_{k+1}-2v_k+v_{k-1})-(u_k^2+v_k^2)(v_{k+1}+v_{k-1}),\\
        \dot{v_k}&=\frac{1}{\Delta x^2}(u_{k+1}-2u_k+u_{k-1})+(u_k^2+v_k^2)(u_{k+1}+u_{k-1}).
    \end{split}
\end{equation*}
With $u=(u_1,\cdots,u_N)$, $v=(v_1,\cdots,v_N)$ this system can be written as
\begin{equation} \label{eq:al_model}
    \begin{pmatrix}\dot{u} \\ \dot{v}\end{pmatrix}=\begin{pmatrix}0 & -D(u,v) \\ D(u,v) & 0\end{pmatrix}\nabla H(u,v)
\end{equation}
where $D=diag(d_1,\cdots,d_N)$ is the diagonal matrix with entries $d_k(u,v)=1+\Delta x^2(u_k^2+v_k^2)$, and the Hamiltonian is
\begin{equation*}
\begin{split}
    H(u,v)&=\frac{1}{\Delta x^2}\sum_{l=1}^{N}(u_lu_{l-1}+v_lv_{l-1})\\&-\frac{1}{\Delta x^4}\sum_{l=1}^{N}\ln(1+\Delta x^2(u_l^2+v_l^2)).
\end{split}
\end{equation*}
We thus get a Poisson system. In the experiment, we choose the boundary condition $u(x,0)=2+0.2\cdot\cos(2\pi x)$, $v(x,0)=0$ and set $N=20$, hence (\ref{eq:al_model}) is a Poisson system of dimension 40. We then generate 500 training points followed by 100 test points with time step $h=0.01$. That means the solution to this equation during the time interval $[0,5]$ is treated as the training set, and then we learn the data using a PNN and predict the solution between $[5,6]$. The result is shown in Fig.~\ref{fig:AL}: both of the real part and imaginary part match the ground truth well.

\begin{figure}[ht]
    \centering
    \includegraphics[width=0.48\textwidth]{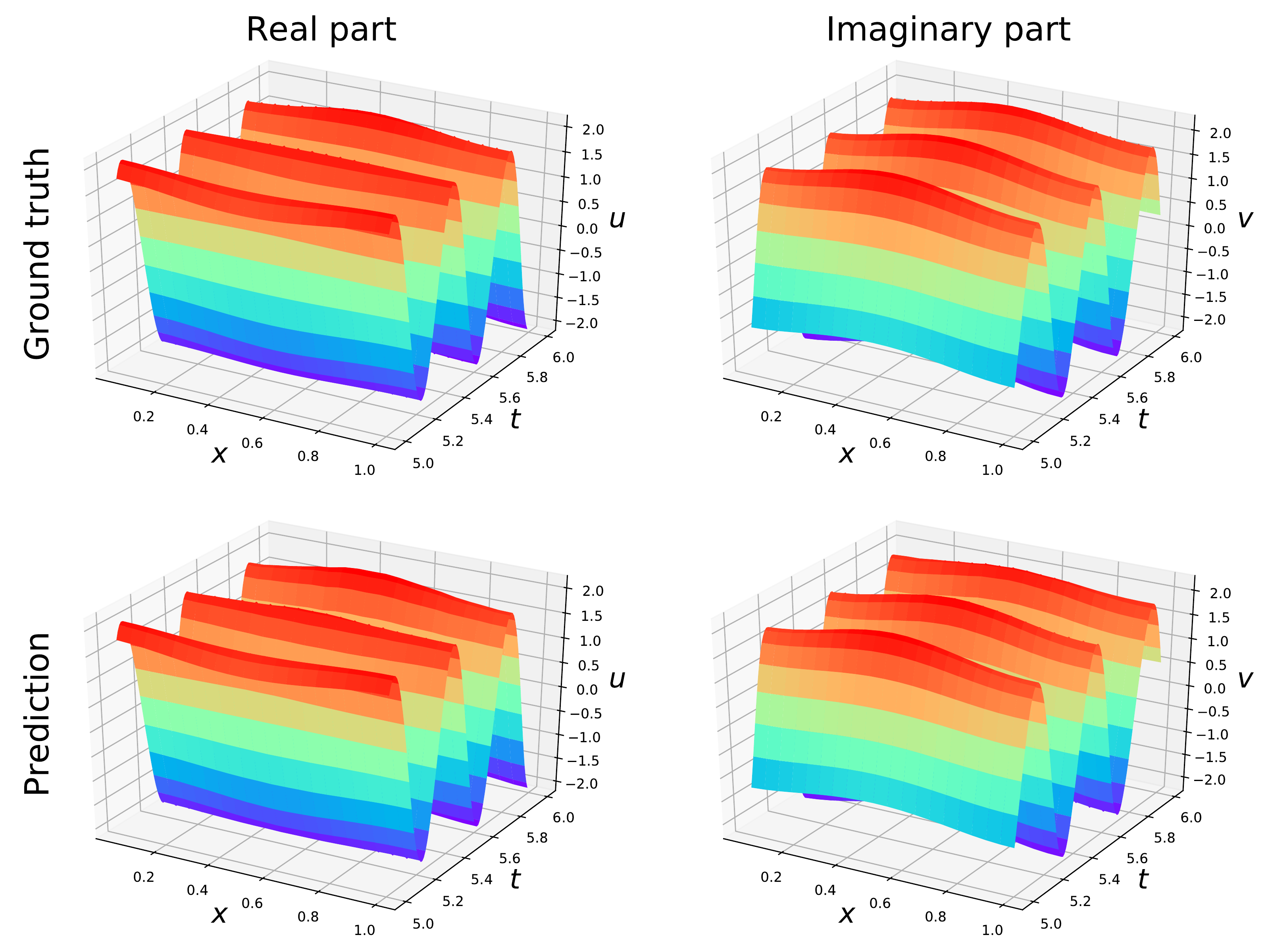}
    \caption{\textbf{Ablowitz–Ladik model of nonlinear Schr{\"o}dinger equation.} Predictions of both real and imaginary parts match the ground truth well.}
    \label{fig:AL}
\end{figure}

\begin{figure*}[ht]
    \centering
    \includegraphics[width=0.90\textwidth]{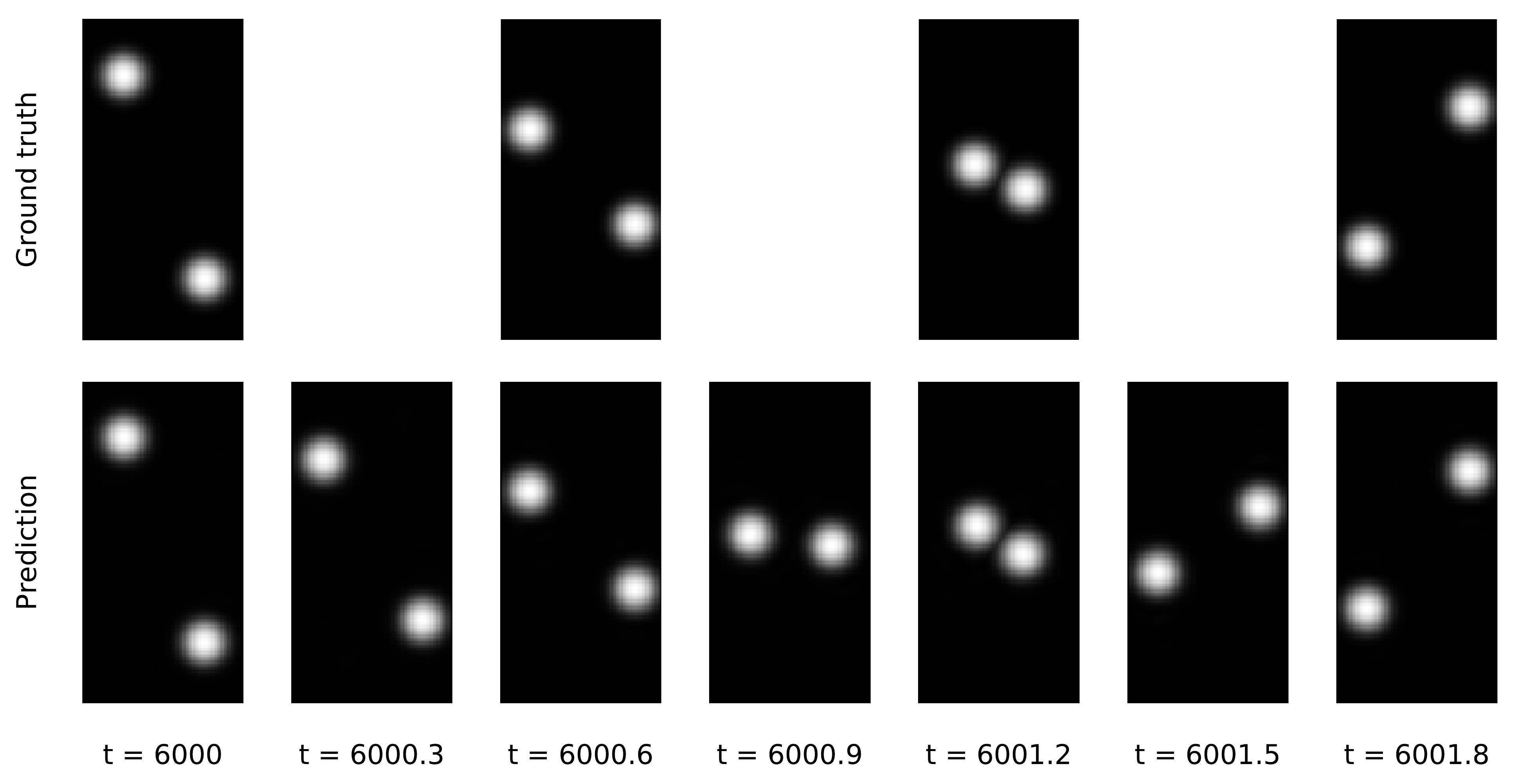}
    \caption{\textbf{Long time prediction and frame interpolation for two body images.} (\textbf{Top}) Ground truth, four consecutive points in the test dataset, after $t = 6000$ with step size $h = 0.6$. (\textbf{Bottom}) Predictions made by the PNN, with a finer step size $h=0.3$. (\textbf{All}) PNNs can handle long-time integration and frame interpolation perfectly.}
    \label{fig:TB_PNN}
\end{figure*}

\subsection{Pixel observations of two-body problem} \label{ssec:tb}
We consider the pixel observations of the two-body problem, which are the images of two balls in black background, as shown in Fig.~\ref{fig:TB_PNN}. Time series of the images form a movie of the motion of two balls governed by gravitation. Here we intend to learn the phase flow on a coarse time grid while making predictions on a finer time grid, to forecast and smoothen the movie simultaneously. To achieve this goal, a simple recurrent training scheme is applied to our method. Similar treatments can be found in \cite{anderson1996comparison}.

Suppose the training dataset is $\{x_n := x(n\Delta t)\}_{n=0}^{N}$. We set our goal to be making predictions on
$x(t)$ at $t = (N+\frac{n}{m})\Delta t$, $n\geq 1$. Denote the PNN to be trained as $f_{PNN} = \theta^{-1}\circ \Phi\circ \theta$. Then we train
\begin{equation*}
    f_{PNN}^m=\theta^{-1}\circ\Phi^m\circ \theta
\end{equation*}
to approximate the flow from $x_i$ to $x_{i+1}$, and $m$ is set to 2 in this case. Since we are learning a single trajectory on a submanifold of the high-dimensional space, autoencoders are used to approximate $\theta$. $\lambda$ in the corresponding loss function is chosen to be 1. After training, $f_{PNN}^k=\theta^{-1}\circ \Phi^k\circ \theta$ is used to generate predictions for $k\geq 1$.

A single trajectory $x(t)$ of the system is generated with time step $\Delta t = 0.6$, as shown in Fig.~\ref{fig:TB_PNN}. The training dataset contains $N = 100$ images of size $100\times 50$. Let $X_{grid} = (x_{N+1},\cdots,x_{N+k})$ and $\Tilde{X}_{grid} = (\Tilde{x}_{N+1},\cdots,\Tilde{x}_{N+k})$ denote the ground truth and predictions made by the PNN at grid points. Let $X_{mid} = (x_{N+\frac{1}{2}},\cdots,x_{N+k-\frac{1}{2}})$ and $\Tilde{X}_{mid} = (\Tilde{x}_{N+1},\cdots,\Tilde{x}_{N+k})$ denote the ground truth and predictions made by the PNN on middle points of the grids. The test loss on grids is calculated as the mean squared error between $X_{grid}$ and $\Tilde{X}_{grid}$ while the test loss on middle points is calculated as the mean squared error between $X_{mid}$ and $\Tilde{X}_{mid}$. We use a similar definition as in \cite{VLACHAS2020191} to compute the valid prediction time $T_\epsilon$. Suppose we are given the ground truth dataset $x$ and prediction $\tilde{x}$, starting from $t = 0$. Let the root mean square error (RMSE) be
\begin{equation*}
    \mathcal{E} (\tilde{x}) = \sqrt{\langle(x-\tilde{x})^2\rangle},
\end{equation*}
where $<\cdot>$ stands for spatial average. The valid prediction time (VPT) is defined to be
\begin{equation*}
    T_\epsilon = \argmax_{t_f} \{t_f|\mathcal{E}(\tilde{x}(t)) \leq \epsilon, \forall t\leq t_f\},
\end{equation*}
where $\epsilon$ is a hyperparameter to be chosen. Here we set $\epsilon = 0.02$.

Low error is obtained both on the grid points and the middle points, as shown in Table \ref{tab:TB_loss}, which indicates that PNNs can handle prediction and interpolation simultaneously. The VPT is much longer than the time scale of the training window, further suggesting that PNNs are good at long-time predictions and intrinsically structure-preserving. It can be seen in Fig.~\ref{fig:TB_PNN} that the prediction matches the ground truth perfectly even after $t = 6000$.

Note that according to Theorem \ref{thm:nd_auto}, the periodic solution to an autonomous dynamical system in $\R^d$ can always be learned by PNNs when $d>3$. Since this trajectory of two-body system is periodic in $\R^{100\times 50}$ and the image at step $n+1$ is uniquely determined by the image at step $n$, we can assume without loss of generality that the pixel observations of a two-body system form a periodic solution to an autonomous dynamical system, regardless of whether the internal mechanism is Hamiltonian. 

\begin{table}[htbp]
\centering
\begin{tabular}{|c|c|c|c|}
\hline
 Train MSE & Test MSE (Grid) & Test MSE (Middle) & VPT \\ \hline
 $1.7\times 10^{-6}$ & $2.1\times 10^{-6}$ & $2.0\times 10^{-6}$ & 6308 \\ \hline
\end{tabular}
\caption{\textbf{Losses and VPT of the PNN.} The test MSE is evaluated along 100 steps starting at the end point of data.}
\label{tab:TB_loss}
\end{table}

\section{Summary}\label{sec:summary}
The main contribution of this paper is to provide a novel high-level network architecture, PNN, to learn the phase flow of an arbitrary Poisson system. Since a single periodic solution to an autonomous system can be proven to be a solution to a Poisson system if the orbit is unknotted, PNNs can be directly applied to a much broader class of systems without modification. From this perspective, theoretical results regarding the approximation ability of PNNs are presented. Several simulations including the Lotka-Volterra equation, an extended pendulum system, charged particles in the electromagnetic potential, a nonlinear Schr{\"o}dinger equation and a trajectory of the two-body problem support our theoretical findings and illustrate the advantages of PNNs on long time prediction and frame interpolation. 
Even though not explicitly mentioned in the paper, PNNs can be easily extended to learn phase flows from irregularly sampled data. Interested readers may refer to \cite{jin2020sympnets} for more details. PNNs can also learn the Hamiltonian systems on low dimensional submanifolds or constrained Hamiltonian systems, which can be expressed as a Poisson system on local coordinates \cite[Chapter VII.1]{hairer2006geometric}.

Despite the great expressivity, stability and interpretability of PNNs, an open issue is whether one can use it to infer and make predictions on multiple trajectories of an autonomous system without generalized Poisson structure under certain circumstances. We conjectured in the paper that the solution to an arbitrary autonomous system lying on a smooth trivial $(2d-1)$-knot matches the solution to a Poisson system. If this holds, the use of PNNs on multiple trajectories of a general autonomous system would be theoretically justified. We leave the proof or counterexamples of this conjecture as future work.

\appendices
\section{Introduction to knot} \label{sec:knot}
Consider the embeddings of $S^n$ in $S^m$, $m\geq n+1$. Two embeddings $k_1,k_2$ are equivalent if there is a homeomorphism $h$ of $S^m$ such that $h(k_1)=k_2$. An embedding $k:S^n\subset S^m$ is \emph{unknotted} if it is equivalent to the trivial knot $k_0:S^n\subset S^m$ defined by the standard embedding
\begin{equation*}
    k_0:S^n\to S^m;\ (x_0,\cdots,x_n)\to(x_0,\cdots,x_n,0,\cdots,0).
\end{equation*}
In fact, the embedding $k$ is always unknotted when $m\neq n+2$. Therefore an $n$-knot is defined as an embedding of $S^n$ in $S^{n+2}$. For convenience, let us make a little change in the terminology: an $n$-knot means an embedding of $S^{n}$ in $S^{m}$ for $m\geq n+1$, since we are more concerned about the trivial knot in this work.

\section{Proofs for theorems}
\subsection{Proof of Theorem \ref{thm:2d_auto}} \label{proof:2d_auto}
In two-dimensional space, consider the system
\begin{equation*}
\left\{\begin{aligned}
&\Delta \boldsymbol{v}={\rm grad}\ p, \\
&{\rm div}\ \boldsymbol{v} = 0,
\end{aligned}\right.
\end{equation*}
with boundary condition
\begin{equation*}
    \boldsymbol{v}|_\Gamma=\boldsymbol{a},
\end{equation*}
where $\Gamma$ is the orbit of $y(t)$ and $\boldsymbol{v}:U\to\R^2$ is a vector field. \cite[p. 60, Theorem 1]{ladijzenskaia1969mathematical} shows that there exists a solution $\boldsymbol{v}$ to the system above given a suitable single-valued function $p$ and any continuous $\boldsymbol{a}$ satisfying
\begin{equation*}
    \oint_\Gamma \boldsymbol{a}\cdot\boldsymbol{n}ds=0,
\end{equation*}
where $\boldsymbol{n}$ is the exterior normal with respect to the domain inside $\Gamma$. Set $\boldsymbol{a}=f$, we have
\begin{equation*}
    \oint_\Gamma f\cdot\boldsymbol{n}ds=\oint_\Gamma 0ds=0,
\end{equation*}
hence there exists a $\boldsymbol{v}$ such that
\begin{equation*}
    {\rm div}\ \boldsymbol{v}=0,\quad \boldsymbol{v}|_\Gamma=f.
\end{equation*}
Note that $\boldsymbol{v}$ is a solenoidal vector field, which leads to
\begin{equation*}
    \boldsymbol{v} = {\rm curl}\ (-H)=(-\frac{\partial H}{\partial y_2}, \frac{\partial H}{\partial y_1})^T=J^{-1}\nabla H.
\end{equation*}
The $H$ defined above is exactly the Hamiltonian we are looking for.
\subsection{Proof of Theorem \ref{thm:nd_auto}} \label{proof:nd_auto}
Since $y(t)$ is unknotted, there exist a periodic $\gamma(t)=(\gamma_1(t),\gamma_2(t))^T\in\R^2$ and a homeomorphism $\theta:U\to U$ such that
\begin{equation*}
    \theta(y(t))=\Gamma(t):=(\gamma_1(t),\gamma_2(t),0,\cdots,0)^T\in\R^n,
\end{equation*}
which is exactly the coordinate transformation deforming $y(t)$ into $\Gamma(t)$. Theorem \ref{thm:2d_auto} shows that $\gamma(t)$ is also a solution to a Hamiltonian system, hence the extended solution $\Gamma(t)$ satisfies
\begin{equation} \label{eq:gamma}
\left\{\begin{aligned}
&\dot{\Gamma}=\begin{pmatrix}
J^{-1} & 0 \\ 0 & 0
\end{pmatrix}\nabla K(\Gamma),\quad J\in\R^{2\times 2}, \\
&K(y_1,y_2,\cdots,y_n)=H(y_1,y_2),
\end{aligned}\right.
\end{equation}
for a single-valued function $H$. In fact, a Poisson system expressed in an arbitrary new coordinate immediately becomes a new Poisson system with a new $B(y)$ whose rank is equivalent to the original one \cite[p. 265]{hairer2006geometric}. Therefore, $y(t)$ is also a solution to a Poisson system obtained by expressing system (\ref{eq:gamma}) in new coordinate via transformation $\theta$. Furthermore, they share the same latent dimension of 2.

\section{Implementation of architecture}\label{sec:arch}

\begin{table*}[htbp]
\centering
\begin{tabular}{|c|c|c|c|c|c|c|c|c|}
\hline
\multicolumn{2}{|c|}{\multirow{2}{*}{Problem}} & \multicolumn{2}{c|}{Section \ref{ssec:lv}} & \multirow{2}{*}{Section \ref{ssec:pd}} & \multicolumn{2}{c|}{Section \ref{ssec:lorentz}} & \multirow{2}{*}{Section \ref{ssec:al}} & \multirow{2}{*}{Section \ref{ssec:tb}} \\ \cline{3-4} \cline{6-7}
\multicolumn{2}{|c|}{} & PNN & SympNet &  & PNN & VPNN &  &  \\ \hline
\multirow{5}{*}{$\theta$} & Type & NVP & - & NVP & VP/NVP & VP & NVP & AE \\ \cline{2-9}
 & Partition & 1 & - & 2 & 2 & 2 & 20 & - \\ \cline{2-9}
 & Layers & 3 & - & 3 & 10 & 20 & 3 & 2 \\ \cline{2-9}
 & Sublayers & 2 & - & 2 & 3 & 3 & 2 & - \\ \cline{2-9}
 & Width & 30 & - & 30 & 50 & 50 & 100 & 50 \\ \hline
\multirow{4}{*}{$\Phi$} & Type & G & G & E & G & - & G & LA \\ \cline{2-9}
 & Layers & 3 & 6 & 3 & 10 & - & 10 & 3 \\ \cline{2-9}
 & Sublayers & - & - & - & - & - & - & 2 \\ \cline{2-9}
 & Width & 30 & 30 & 30 & 50 & - & 100 & - \\ \hline
\end{tabular}
\caption{\textbf{Model architecture.} The detailed implementations of symplectic neural networks, invertible neural networks as well as the autoencoder, and their corresponding terminology used in this table are shown in Appendix \ref{sec:arch}. The Sigmoid activation functions are used for all the models. The optimizer is chosen to be Adam \cite{adam2015} with learning rate 0.001. The numbers of training iterations are $2\times 10^{5}$, $1\times 10^{5}$, $2\times 10^{6}$, $1\times 10^{6}$, and $5\times 10^{5}$ respectively for the five problems considered.}
\label{tab:model_params}
\end{table*}

\subsection{Extended symplectic neural networks}
\label{impl:sympnet}
We adopt SympNets \cite{jin2020sympnets} as the universal approximators for symplectic maps. The architecture of SympNets is based on three modules, i.e.,
\begin{itemize}
    \item \textbf{Linear modules.} \begin{equation*}
    \begin{split}
    &\mathcal{L}_{n}\begin{pmatrix} p \\ q \end{pmatrix}\\
    =&\begin{pmatrix} I & 0/S_{n} \\ S_{n}/0 & I \end{pmatrix}\cdots\begin{pmatrix} I & 0 \\ S_{2} & I \end{pmatrix}\begin{pmatrix} I & S_{1} \\ 0 & I \end{pmatrix}\begin{pmatrix} p \\ q \end{pmatrix}+b,\\
    &p,q\in\R^d,    
    \end{split}
    \end{equation*}
    where $S_i\in\R^{d\times d}$ are symmetric, $b\in\R^{2d}$ is the bias, while the unit upper triangular symplectic matrices and the unit lower triangular symplectic matrices appear alternately. In this module, $S_i$ (represented by $A_i+A_i^T$ in practice) and $b$ are parameters to learn. In fact, $\mathcal{L}_{n}$ can represent any linear symplectic map \cite{jin2020unit}.
    \item \textbf{Activation modules.} \begin{equation*}
    \begin{split}
    &\mathcal{N}_{up}\begin{pmatrix} p \\ q \end{pmatrix}=\begin{pmatrix} p+\Tilde{\sigma}_a(q) \\ q \end{pmatrix},\\ &\mathcal{N}_{low}\begin{pmatrix} p \\ q \end{pmatrix}=\begin{pmatrix} p \\ \Tilde{\sigma}_a(p)+q \end{pmatrix},\quad p,q\in\R^d,    
    \end{split}
    \end{equation*}
    where $\Tilde{\sigma}_a(x):=a\odot\sigma(x)$ for $x\in\R^d$. Here $\odot$ is the element-wise product, $\sigma$ is the activation function, and $a\in\R^d$ is the parameter to learn.
    \item \textbf{Gradient modules.} \begin{equation*}
    \begin{split}
    &\mathcal{G}_{up}\begin{pmatrix} p \\ q \end{pmatrix}=\begin{pmatrix} p+\hat{\sigma}_{K,a,b}(q) \\ q \end{pmatrix},\\ &\mathcal{G}_{low}\begin{pmatrix} p \\ q \end{pmatrix}=\begin{pmatrix} p \\ \hat{\sigma}_{K,a,b}(p)+q \end{pmatrix},\quad p,q\in\R^d,    
    \end{split}
    \end{equation*}
    where $\hat{\sigma}_{K,a,b}(x):=K^T(a\odot\sigma(Kx+b))$ for $x\in\R^d$. Here $a,b\in\R^l$, $K\in\R^{l\times d}$ are the parameters to learn, and $l$ is a positive integer regarded as the width of the module.
\end{itemize}
The SympNets are the composition of above three modules. In particular, we use two classes of SympNets: the LA-SympNets composed of linear and activation modules, and the G-SympNets composed of gradient modules. Notice that both LA and G SympNets are universal approximators for symplectic maps as shown in \cite{jin2020sympnets}. For convenience, we clarify the terminology for describing a detailed LA(G)-SympNet: an LA-SympNet of $m$ layers with $n$ sublayers means it is the composition of $m$ linear modules and $m-1$ activation modules, where linear and activation modules appear alternatively like the architecture of fully-connected neural network, and each linear module is composed of $n$ alternated triangular symplectic matrices; a G-SympNet of $m$ layers with width $l$ means it is composed of $m$ alternated gradient modules, and the width $l$ is defined as above.

In this works we further develop the extended symplectic neural networks by extending the gradient modules.
\begin{itemize}
    \item \textbf{Extended modules.} \begin{equation*}
    \begin{split}
    &\mathcal{E}_{up}\begin{pmatrix} p \\ q \\ c \end{pmatrix}=\begin{pmatrix} p+\widehat{\sigma}_{K_1,K_2,a,b}(q,c) \\ q \\ c \end{pmatrix},\\ &\mathcal{E}_{low}\begin{pmatrix} p \\ q \\ c \end{pmatrix}=\begin{pmatrix} p \\ \widehat{\sigma}_{K_1,K_2,a,b}(p.c)+q \\ c \end{pmatrix},\\
    &p,q\in\R^d,\ c\in\R^{n-2d},
    \end{split}
    \end{equation*}
    where $\widehat{\sigma}_{K_1,K_2,a,b}(x,c):=K_1^T(a\odot\sigma(K_1x+K_2c+b))$ for $x\in\R^d$, $c\in\R^{n-2d}$. Here $a,b\in\R^l$, $K_1\in\R^{l\times d}$, $K_2\in\R^{l\times (n-2d)}$ are the parameters to learn, and $l$ is a positive integer regarded as the width of the module.
\end{itemize}
The extended symplectic neural networks (E-SympNets) are the composition of extended modules. As noticed, $2d$ is the latent dimension of the E-SympNets. The terminology for describing the architecture of E-SympNets is the same as that for G-SympNets, hence we will not repeat here. It is worth mentioning that the approximation property of E-SympNets is still unknown, which is left as future work. 

\subsection{Invertible neural networks}
In numerical experiments, we adopt the NICE \cite{dinh2014nice} as volume-preserving invertible neural networks, and the real NVP \cite{dinh2016density} as general non-volume-preserving invertible neural networks.
\begin{itemize}
    \item \textbf{Volume-preserving modules.}
    \begin{equation*}
    \begin{split}
        & \mathcal{V}_{up}\begin{pmatrix}
        x_1 \\ x_2
        \end{pmatrix}=\begin{pmatrix}
        x_1+m_1(x_2) \\ x_2
        \end{pmatrix},\\ &\mathcal{V}_{low}\begin{pmatrix}
        x_1 \\ x_2
        \end{pmatrix}=\begin{pmatrix}
        x_1 \\ m_2(x_1)+x_2
        \end{pmatrix},\\ 
        &x_1\in\R^d,x_2\in\R^{n-d},
    \end{split}
    \end{equation*}
    where $m_1:\R^{n-d}\to\R^d$ and $m_2:\R^d\to\R^{n-d}$ are modeled as fully-connected neural networks.
    \item \textbf{Non-volume-preserving modules.}
    \begin{equation*}
    \begin{split}
        &\mathcal{C}_{up}\begin{pmatrix}
        x_1 \\ x_2
        \end{pmatrix}=\begin{pmatrix}
        x_1\odot\exp{s_1(x_2)}+t_1(x_2) \\ x_2
        \end{pmatrix}, \\ &\mathcal{C}_{low}\begin{pmatrix}
        x_1 \\ x_2
        \end{pmatrix}=\begin{pmatrix}
        x_1 \\ x_2\odot\exp{s_2(x_1)}+t_2(x_1)
        \end{pmatrix},\\ &x_1\in\R^d,x_2\in\R^{n-d},
    \end{split}
    \end{equation*}
    where $s_1,t_1:\R^{n-d}\to\R^d$ and $s_2,t_2:\R^d\to\R^{n-d}$ are modeled as fully-connected neural networks.
\end{itemize}
Basically, the volume-preserving (non-volume-preserving) invertible neural networks are the alternated composition of upper and lower volume-preserving (non-volume-preserving) modules. We say a volume-preserving (non-volume-preserving) INN is of $m$ layers and $n$ sublayers with width $l$ if it is composed of $m$ alternated volume-preserving (non-volume-preserving) modules and each of $m_1,m_2$ ($s_1,t_1,s_2,t_2$) is a FNN of $n$ layers with width $l$. Meanwhile, the partition dimension $d$ is also fixed as a architecture parameter.

\subsection{Autoencoder}
We apply the traditional autoencoder \cite{kramer1991nonlinear} to the alternative architecture for dimension-reduced cases. Suppose that the latent dimension is ${\rm rank}(B(y))=2d<n$, then the aotuencoder is composed of two fully-connected neural networks, i.e., an encoder $f_e:\R^n\to\R^{2d}$ and a decoder $f_d:\R^{2d}\to\R^n$, whose hidden nodes are all not less than $2d$. Note that the FNNs can also be replaced by other useful architectures like CNNs, if needed. In Section \ref{sec:result}, the encoder and decoder are chosen of same depth and width.

\section{Transformations for Poisson systems} \label{sec:trans}
Here we briefly present the transformations to Hamiltonian systems for the involved Poisson systems in this work. \\
\textbf{Lotka–Volterra equation.} With coordinate transformation $(p,q)=(\ln u,\ln v)$, system (\ref{eq:LV}) can be written as
\begin{equation*}
    \begin{pmatrix} \dot{p} \\ \dot{q} \end{pmatrix}=J^{-1}\nabla H(p,q),\quad H(p,q)=p-\exp(p)+2q-\exp(q).
\end{equation*}
\textbf{Charged particle in the electromagnetic potential.} By considering the position $x$ and the conjugate momentum $p=m\dot{x}+qA(x)$, the Poisson system (\ref{eq:poi_lorentz}) can be written in the canonical form
\begin{equation*}
\begin{split}
    &\begin{pmatrix}\dot{p} \\ \dot{x}\end{pmatrix}=J^{-1}\nabla H(p,x), \\ &H(p,x)=\frac{1}{2m}(p-qA(x))^T(p-qA(x))+q\varphi(x).
\end{split}
\end{equation*} \\
\textbf{Nonlinear Schr{\"o}dinger equation.} A transformation has been proposed to make (\ref{eq:al_model}) a canonical Hamiltonian system:
\begin{equation*}
\left\{\begin{aligned}
&p_k=u_k\sigma(\Delta x^2(u_k^2+v_k^2)) \\
&q_k=v_k\sigma(\Delta x^2(u_k^2+v_k^2))
\end{aligned}\right.,\quad {\rm with} \quad \sigma(x)=\sqrt{\frac{\ln(1+x)}{x}},
\end{equation*}
which treats the variables symmetrically. Its inverse is
\begin{equation*}
\left\{\begin{aligned}
&u_k=p_k\tau(\Delta x^2(p_k^2+q_k^2)) \\
&v_k=q_k\tau(\Delta x^2(p_k^2+q_k^2))
\end{aligned}\right.,\quad {\rm with} \quad \tau(x)=\frac{\exp{x}-1}{x}.
\end{equation*}
Now the Hamiltonian in the new variables is
\begin{equation*}
\begin{split}
    H(p,q)=&\frac{1}{\Delta x^2}\sum_{l=1}^{N}\tau(\Delta x^2(p_l^2+q_l^2))\tau(\Delta x^2(p_{l-1}^2+q_{l-1}^2))\\&\cdot(p_lp_{l-1}+q_lq_{l-1})-\frac{1}{\Delta x^2}\sum_{l=1}^{N}(p_l^2+q_l^2).
\end{split}
\end{equation*}

\ifCLASSOPTIONcaptionsoff
  \newpage
\fi



\bibliographystyle{IEEEtran}
\bibliography{IEEEabrv,reference}

\begin{thebibliography}{10}
\providecommand{\url}[1]{#1}
\csname url@samestyle\endcsname
\providecommand{\newblock}{\relax}
\providecommand{\bibinfo}[2]{#2}
\providecommand{\BIBentrySTDinterwordspacing}{\spaceskip=0pt\relax}
\providecommand{\BIBentryALTinterwordstretchfactor}{4}
\providecommand{\BIBentryALTinterwordspacing}{\spaceskip=\fontdimen2\font plus
\BIBentryALTinterwordstretchfactor\fontdimen3\font minus
  \fontdimen4\font\relax}
\providecommand{\BIBforeignlanguage}[2]{{%
\expandafter\ifx\csname l@#1\endcsname\relax
\typeout{** WARNING: IEEEtran.bst: No hyphenation pattern has been}%
\typeout{** loaded for the language `#1'. Using the pattern for}%
\typeout{** the default language instead.}%
\else
\language=\csname l@#1\endcsname
\fi
#2}}
\providecommand{\BIBdecl}{\relax}
\BIBdecl

\bibitem{chang2017reversible}
B.~Chang, L.~Meng, E.~Haber, L.~Ruthotto, D.~Begert, and E.~Holtham,
  ``Reversible architectures for arbitrarily deep residual neural networks,''
  \emph{arXiv preprint arXiv:1709.03698}, 2017.

\bibitem{weinan2017proposal}
W.~E, ``A proposal on machine learning via dynamical systems,''
  \emph{Communications in Mathematics and Statistics}, vol.~5, no.~1, pp.
  1--11, 2017.

\bibitem{haber2017stable}
E.~Haber and L.~Ruthotto, ``Stable architectures for deep neural networks,''
  \emph{Inverse Problems}, vol.~34, no.~1, p. 014004, 2017.

\bibitem{lu2018beyond}
Y.~Lu, A.~Zhong, Q.~Li, and B.~Dong, ``Beyond finite layer neural networks:
  Bridging deep architectures and numerical differential equations,'' in
  \emph{International Conference on Machine Learning}.\hskip 1em plus 0.5em
  minus 0.4em\relax PMLR, 2018, pp. 3276--3285.

\bibitem{lu2020mean}
Y.~Lu, C.~Ma, Y.~Lu, J.~Lu, and L.~Ying, ``A mean-field analysis of deep resnet
  and beyond: Towards provable optimization via overparameterization from
  depth,'' \emph{arXiv preprint arXiv:2003.05508}, 2020.

\bibitem{chen2018neural}
R.~T. Chen, Y.~Rubanova, J.~Bettencourt, and D.~K. Duvenaud, ``Neural ordinary
  differential equations,'' in \emph{Advances in neural information processing
  systems}, 2018, pp. 6571--6583.

\bibitem{rico1994continuous}
R.~Rico-Martinez, J.~Anderson, and I.~Kevrekidis, ``Continuous-time nonlinear
  signal processing: a neural network based approach for gray box
  identification,'' in \emph{Proceedings of IEEE Workshop on Neural Networks
  for Signal Processing}.\hskip 1em plus 0.5em minus 0.4em\relax IEEE, 1994,
  pp. 596--605.

\bibitem{yannis1998rk}
R.~González-García, R.~Rico-Martínez, and I.~Kevrekidis, ``Identification of
  distributed parameter systems: A neural net based approach,'' \emph{Computers
  \& Chemical Engineering}, vol.~22, pp. S965 -- S968, 1998, european Symposium
  on Computer Aided Process Engineering-8.

\bibitem{raissi2018multistep}
M.~Raissi, P.~Perdikaris, and G.~E. Karniadakis, ``Multistep neural networks
  for data-driven discovery of nonlinear dynamical systems,'' \emph{arXiv
  preprint arXiv:1801.01236}, 2018.

\bibitem{zhu2020inverse}
A.~Zhu, P.~Jin, and Y.~Tang, ``Inverse modified differential equations for
  discovery of dynamics,'' \emph{arXiv preprint arXiv:2009.01058}, 2020.

\bibitem{feng1984difference}
K.~Feng, ``On difference schemes and symplectic geometry,'' in
  \emph{Proceedings of the 5th international symposium on differential geometry
  and differential equations}, 1984.

\bibitem{hairer2006geometric}
E.~Hairer, C.~Lubich, and G.~Wanner, \emph{Geometric numerical integration:
  structure-preserving algorithms for ordinary differential equations}.\hskip
  1em plus 0.5em minus 0.4em\relax Springer Science \& Business Media, 2006,
  vol.~31.

\bibitem{lubich2008quantum}
C.~Lubich, \emph{From quantum to classical molecular dynamics: reduced models
  and numerical analysis}.\hskip 1em plus 0.5em minus 0.4em\relax European
  Mathematical Society, 2008.

\bibitem{tom2019ham}
T.~Bertalan, F.~Dietrich, I.~Mezić, and I.~G. Kevrekidis, ``On learning
  {Hamiltonian} systems from data,'' \emph{Chaos: An Interdisciplinary Journal
  of Nonlinear Science}, vol.~29, no.~12, p. 121107, 2019.

\bibitem{greydanus2019hamiltonian}
S.~Greydanus, M.~Dzamba, and J.~Yosinski, ``{Hamiltonian} neural networks,'' in
  \emph{Advances in Neural Information Processing Systems}, 2019, pp.
  15\,353--15\,363.

\bibitem{rezende2019equivariant}
D.~J. Rezende, S.~Racani{\`e}re, I.~Higgins, and P.~Toth, ``Equivariant
  {Hamiltonian} flows,'' \emph{arXiv preprint arXiv:1909.13739}, 2019.

\bibitem{sanchez2019hamiltonian}
A.~Sanchez-Gonzalez, V.~Bapst, K.~Cranmer, and P.~Battaglia, ``{Hamiltonian}
  graph networks with {ODE} integrators,'' \emph{arXiv preprint
  arXiv:1909.12790}, 2019.

\bibitem{chen2020symplectic}
Z.~Chen, J.~Zhang, M.~Arjovsky, and L.~Bottou, ``Symplectic recurrent neural
  networks,'' in \emph{8th International Conference on Learning
  Representations, {ICLR} 2020, Addis Ababa, Ethiopia, April 26-30,
  2020}.\hskip 1em plus 0.5em minus 0.4em\relax OpenReview.net, 2020.

\bibitem{Toth2020Hamiltonian}
P.~Toth, D.~J. Rezende, A.~Jaegle, S.~Racanière, A.~Botev, and I.~Higgins,
  ``{Hamiltonian} generative networks,'' in \emph{International Conference on
  Learning Representations}, 2020.

\bibitem{Zhong2020Symplectic}
Y.~D. Zhong, B.~Dey, and A.~Chakraborty, ``Symplectic {ODE-Net}: Learning
  {Hamiltonian} dynamics with control,'' in \emph{International Conference on
  Learning Representations}, 2020.

\bibitem{jin2020sympnets}
P.~Jin, Z.~Zhang, A.~Zhu, Y.~Tang, and G.~E. Karniadakis, ``{SympNets}:
  Intrinsic structure-preserving symplectic networks for identifying
  {Hamiltonian} systems,'' \emph{Neural Networks}, vol. 132, pp. 166--179,
  2020.

\bibitem{dipietro2020sparse}
D.~DiPietro, S.~Xiong, and B.~Zhu, ``Sparse symplectically integrated neural
  networks,'' \emph{Advances in Neural Information Processing Systems},
  vol.~33, 2020.

\bibitem{xiong2020nonseparable}
S.~Xiong, Y.~Tong, X.~He, C.~Yang, S.~Yang, and B.~Zhu, ``Nonseparable
  symplectic neural networks,'' \emph{arXiv preprint arXiv:2010.12636}, 2020.

\bibitem{cranmer2020lagrangian}
M.~Cranmer, S.~Greydanus, S.~Hoyer, P.~Battaglia, D.~Spergel, and S.~Ho,
  ``Lagrangian neural networks,'' \emph{arXiv preprint arXiv:2003.04630}, 2020.

\bibitem{finzi2020simplifying}
M.~Finzi, K.~A. Wang, and A.~G. Wilson, ``Simplifying {Hamiltonian} and
  {Lagrangian} neural networks via explicit constraints,'' \emph{Advances in
  Neural Information Processing Systems}, vol.~33, 2020.

\bibitem{tang1996symplectic}
Y.~Tang, L.~V{\'a}zquez, F.~Zhang, and V.~P{\'e}rez-Garc{\'\i}a, ``Symplectic
  methods for the nonlinear {S}chr{\"o}dinger equation,'' \emph{Computers \&
  Mathematics with Applications}, vol.~32, no.~5, pp. 73--83, 1996.

\bibitem{dinh2014nice}
L.~Dinh, D.~Krueger, and Y.~Bengio, ``Nice: Non-linear independent components
  estimation,'' \emph{arXiv preprint arXiv:1410.8516}, 2014.

\bibitem{dinh2016density}
L.~Dinh, J.~Sohl-Dickstein, and S.~Bengio, ``Density estimation using real
  {NVP},'' \emph{arXiv preprint arXiv:1605.08803}, 2016.

\bibitem{rico1992discrete}
R.~Rico-Martinez, K.~Krischer, I.~Kevrekidis, M.~Kube, and J.~Hudson,
  ``Discrete-vs. continuous-time nonlinear signal processing of cu
  electrodissolution data,'' \emph{Chemical Engineering Communications}, vol.
  118, no.~1, pp. 25--48, 1992.

\bibitem{livingston1993knot}
C.~Livingston, \emph{Knot theory}.\hskip 1em plus 0.5em minus 0.4em\relax
  Cambridge University Press, 1993, vol.~24.

\bibitem{armstrong2013basic}
M.~A. Armstrong, \emph{Basic topology}.\hskip 1em plus 0.5em minus 0.4em\relax
  Springer Science \& Business Media, 2013.

\bibitem{ranicki2013high}
A.~Ranicki, \emph{High-dimensional knot theory: Algebraic surgery in
  codimension 2}.\hskip 1em plus 0.5em minus 0.4em\relax Springer Science \&
  Business Media, 2013.

\bibitem{anderson1996comparison}
J.~Anderson, I.~Kevrekidis, and R.~Rico-Martinez, ``A comparison of recurrent
  training algorithms for time series analysis and system identification,''
  \emph{Computers \& chemical engineering}, vol.~20, pp. S751--S756, 1996.

\bibitem{VLACHAS2020191}
P.~Vlachas, J.~Pathak, B.~Hunt, T.~Sapsis, M.~Girvan, E.~Ott, and
  P.~Koumoutsakos, ``Backpropagation algorithms and reservoir computing in
  recurrent neural networks for the forecasting of complex spatiotemporal
  dynamics,'' \emph{Neural Networks}, vol. 126, pp. 191 -- 217, 2020.

\bibitem{ladijzenskaia1969mathematical}
O.~Ladijzenskaia, ``The mathematical theory of viscous incompressible fluid,''
  \emph{Gordon and}, vol. 667, 1969.

\bibitem{adam2015}
D.~P. Kingma and J.~Ba, ``Adam: {A} method for stochastic optimization,'' in
  \emph{3rd International Conference on Learning Representations, {ICLR} 2015,
  San Diego, CA, USA, May 7-9, 2015, Conference Track Proceedings}, 2015.

\bibitem{jin2020unit}
P.~Jin, Y.~Tang, and A.~Zhu, ``Unit triangular factorization of the matrix
  symplectic group,'' \emph{SIAM Journal on Matrix Analysis and Applications},
  vol.~41, no.~4, pp. 1630--1650, 2020.

\bibitem{kramer1991nonlinear}
M.~A. Kramer, ``Nonlinear principal component analysis using autoassociative
  neural networks,'' \emph{AIChE journal}, vol.~37, no.~2, pp. 233--243, 1991.

\end{thebibliography}
\end{document}